\newtheorem{lemma}{Lemma}
\newtheorem{definition}{Definition}
\newcommand{\mdp}{\mathcal{M}}
\newcommand{\eye}{I}
\newcommand{\states}{\mathcal{S}}
\newcommand{\actions}{\mathcal{A}}
\newcommand{\transitions}{P}
\newcommand{\ppi}{\transitions^\pi}
\newcommand{\rewards}{r}
\newcommand{\valuefunctions}{\mathcal{V}}
\newcommand{\reals}{\mathbb{R}}
\newcommand{\bellop}{\mathcal{T}}
\newcommand{\relint}{\text{relint}_K}
\newcommand{\cl}{\text{cl}_K}
\newcommand{\pdelta}{P_{\delta}}
\renewcommand{\det}{\text{det}}
\newcommand{\adj}{\text{adj}}
\newcommand{\affinesev}{H_{s_1,..,s_k}^{\pi}}
\newcommand{\agree}{Y^{\pi}_{s_1,..,s_k}}
\newcommand{\deter}{D_{s, a}}
\newcommand{\E}{\mathop{\mathbb{E}}}
\newcommand{\policies}{\mathcal{P}(\actions)^\states}
\newcommand{\col}{C^{\pi}}
\newcommand{\colprime}{C^{\pi'}}
\newcommand{\pimuzero}{\pi_{\mu_0}}
\newcommand{\pimuone}{\pi_{\mu_1}}
\newcommand{\pizero}{\pi_{0}}
\newcommand{\pione}{\pi_{1}}
\newcommand{\muzero}{\mu_0}
\newcommand{\muone}{\mu_1}
\newcommand{\lvec}{\preccurlyeq}
\newcommand{\gvec}{\succcurlyeq}
\newcommand{\dotbar}{\cdot \, | \,}
\newcommand{\cbar}{\, | \,}
\def \Ppi {\transitions^\pi}
\def \rpi {\rewards_\pi}
\def \agreeone {Y^\pi_{\states \setminus \{s\}}}
\newcommand{\bR}{\mathbb{R}}
\newcommand{\bN}{\mathbb{N}}
\newcommand{\ssubspace}{\mathscr{S}}
\icmltitlerunning{The Value Function Polytope in Reinforcement Learning}
\begin{document}
\twocolumn[
\icmltitle{The Value Function Polytope in Reinforcement Learning}

\icmlsetsymbol{equal}{*}

\begin{icmlauthorlist}
\icmlauthor{Robert Dadashi}{brain}
\icmlauthor{Adrien Ali Ta\"{i}ga}{brain,mila}
\icmlauthor{Nicolas Le Roux}{brain}
\icmlauthor{Dale Schuurmans}{brain,alberta}
\icmlauthor{Marc G. Bellemare}{brain}
\end{icmlauthorlist}

\icmlaffiliation{brain}{Google Brain}
\icmlaffiliation{mila}{Mila, Universit\'{e} de Montr\'{e}al}
\icmlaffiliation{alberta}{Department of Computing Science, University of Alberta}
\icmlcorrespondingauthor{Robert Dadashi}{dadashi@google.com}

\vskip 0.3in
]

\printAffiliationsAndNotice{}

\begin{abstract}
We establish geometric and topological properties of the space of value functions in finite state-action Markov decision processes. 
Our main contribution is the characterization of the nature of its shape: a general polytope \cite{aigner2010proofs}. To demonstrate this result, we exhibit several properties of the structural relationship between policies and value functions including the line theorem, which shows that the value functions of policies constrained on all but one state describe a line segment. Finally, we use this novel perspective to introduce visualizations to enhance the understanding of the dynamics of reinforcement learning algorithms.
\end{abstract}

\section{Introduction}

The notion of value function is central to reinforcement learning (RL). It arises directly in the design of algorithms such as value iteration \citep{Bellman:DynamicProgramming}, policy gradient \cite{sutton2000policy}, policy iteration \citep{howard60dynamic}, and evolutionary strategies \citep[e.g.][]{szita06learning}, which either predict it directly or estimate it from samples, while also seeking to maximize it. The value function is also a useful tool for the analysis of approximation errors \citep{bertsekas96neurodynamic,munos03error}.

In this paper we study the map $\pi \mapsto V^\pi$ from stationary policies, which are typically used to describe the behaviour of RL agents, to their respective value functions. Specifically, we vary $\pi$ over the joint simplex describing all policies and show that the resulting image forms a polytope, albeit one that is possibly self-intersecting and non-convex.

We provide three results all based on the notion of ``policy agreement'', whereby we study the behaviour of the map $\pi \mapsto V^\pi$ as we only allow the policy to vary at a subset of all states.

\noindent \textbf{Line theorem.} We show that policies that agree on all but one state generate a line segment within the value function polytope, and that this segment is monotone (all state values increase or decrease along it).

\noindent \textbf{Relationship between faces and semi-deterministic policies.} We show that $d$-dimensional faces of this polytope are mapped one-to-many to policies which behave deterministically in at least $d$ states.

\noindent \textbf{Sub-polytope characterization.} We use this result to generalize the line theorem to higher dimensions, and demonstrate that varying a policy along $d$ states generates a $d$-dimensional sub-polytope.

Although our ``line theorem'' may not be completely surprising or novel to expert practitioners, we believe we are the first to highlight its existence. In turn, it forms the basis of the other two results, which require additional technical machinery which we develop in this paper, leaning on results from convex analysis and topology.

While our characterization is interesting in and of itself, it also opens up new perspectives on the dynamics of learning algorithms. We use the value polytope to visualize the expected behaviour and pitfalls of common algorithms: value iteration, policy iteration, policy gradient, natural policy gradient \citep{kakade2002natural}, and finally the cross-entropy method \citep{deboer04tutorial}. 

\begin{figure}[h!]
\centering{
\includegraphics[width=0.27\textwidth]{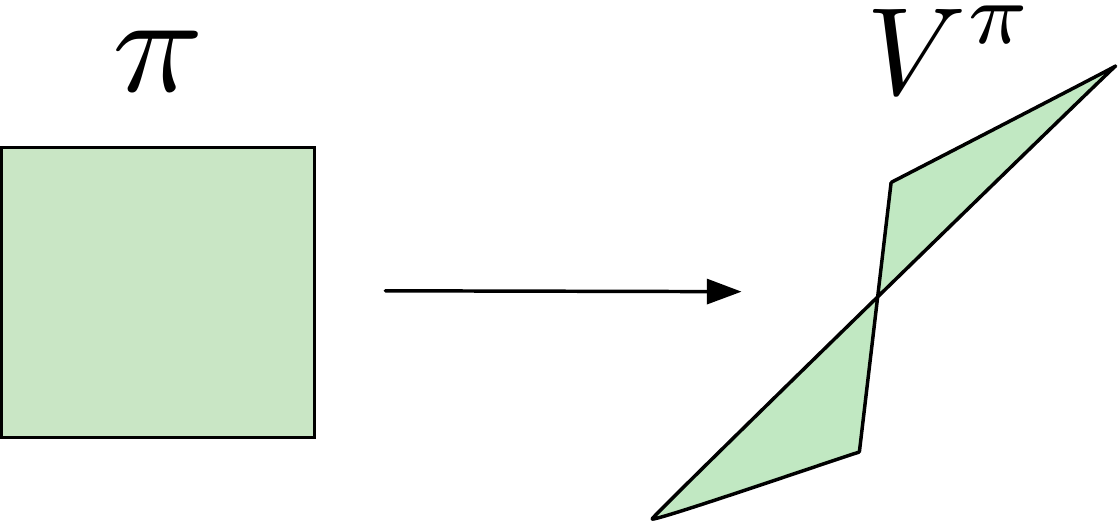}}
\caption{Mapping between policies and value functions.\label{fig:mapping_from_policies_to_value_functions}}
\end{figure}

\section{Preliminaries}

We are in the reinforcement learning setting \cite{sutton2018reinforcement}. We consider a Markov decision process $\mathcal{M} := \langle\states , \actions, \rewards, \transitions, \gamma\rangle$ with $\states$ the finite state space, $\actions$ the finite action space, $\rewards$ the reward function, $\transitions$ the transition function, and $\gamma$ the discount factor for which we assume $\gamma \in [0,1)$.  We denote the number of states by $|\states|$, the number of actions by $|\actions|$.

A stationary policy $\pi$ is a mapping from states to distributions over actions; we denote the space of all policies by $\policies$. Taken with the transition function $P$, a policy defines a state-to-state transition function $P^\pi$:
\begin{equation*}
    P^\pi(s' \cbar s) = \sum_{a \in \actions} \pi(a \cbar s) P(s' \cbar s, a).
\end{equation*}
The value $V^{\pi}$ is defined as the expected cumulative reward from starting in a particular state and acting according to $\pi$:
\begin{equation*}
V^{\pi}(s) = \E\nolimits_{P^\pi}\Big(\sum^{\infty}_{i=0} \gamma^{i}r(s_i, a_i) \cbar s_0=s \Big).
\end{equation*}
The Bellman equation \cite{Bellman:DynamicProgramming} connects the value function $V^{\pi}$ at a state $s$ with the value function at the subsequent states when following $\pi$:
\begin{align} V^\pi(s) = \E\nolimits_{P^\pi}\Big(r(s, a) + \gamma V^\pi(s')\Big)
.
\label{eq:bellman}
\end{align}
Throughout we will make use of vector notation \citep[e.g.][]{puterman94markov}. Specifically, we view (with some abuse of notation) $P^\pi$ as a $|\states| \times |\states|$ matrix, $V^\pi$ as a $|\states|$-dimensional vector, and write $\rewards_\pi$ for the vector of expected rewards under $\pi$. In this notation, the Bellman equation for a policy $\pi$ is
\begin{equation*}
    V^\pi = \rewards_\pi + \gamma \Ppi V^\pi = (I - \gamma \Ppi)^{-1} \rpi .
\end{equation*}

In this work we study how the value function $V^{\pi}$ changes as we continuously vary the policy $\pi$. As such, we will find convenient to also view this value function as the functional
\begin{align*}
  f_v: \; \policies &\rightarrow \reals^{\states}\\
  \pi &\mapsto V^{\pi} = (\eye - \gamma  \Ppi)^{-1}\rpi.
\end{align*}
 We will use the notation $V^\pi$ when the emphasis is on the vector itself, and $f_v$ when the emphasis is on the mapping from policies to value functions.

Finally, we will use $\lvec$ and $\gvec$ for element-wise vector inequalities, and for a function $f : \mathcal{F} \to \mathcal{G}$ and a subset $F \subset \mathcal{F}$ write $f(F)$ to mean the image of $f$ applied to $F$.

\subsection{Polytopes in $\reals^n$}
 
 Central to our work will be the result that the image of the functional $f_v$ applied to the space of policies forms a \emph{polytope}, possibly nonconvex and self-intersecting, with certain structural properties. This section lays down some of the necessary definitions and notations.
 For a complete overview on the topic, we refer the reader to \citet{grunbaum1967convex, ziegler2012lectures, brondsted2012introduction}.

We begin by characterizing what it means for a subset $P \subseteq \reals^n$ to be a convex polytope or polyhedron. In what follows we write $Conv(x_1, \dots, x_k)$ to denote the convex hull of the points $x_1, \dots, x_k$.
\begin{definition}[Convex Polytope]
$P$ is a convex polytope iff there are $k \in \bN$ points $x_1, x_2, ..., x_k \in \reals^n$ such that $P = Conv(x_1, \dots, x_k)$.
\end{definition}

\begin{definition}[Convex Polyhedron]
P is a convex polyhedron iff there are $k \in \bN$ half-spaces $\hat{H}_1, \hat{H}_2, ..., \hat{H}_k$ whose intersection is $P$, that is
\begin{equation*}
    P=\cap_{i=1}^{k} \hat{H}_k.
\end{equation*}
\end{definition}
A celebrated result from convex analysis relates these two definitions: a \emph{bounded}, convex polyhedron is a convex polytope \cite{ziegler2012lectures}.

The next two definitions generalize convex polytopes and polyhedra to non-convex bodies.
\begin{definition}[Polytope]\label{def:general_polytope}
A (possibly non-convex) polytope is a finite union of convex polytopes.
\end{definition}
\begin{definition}[Polyhedron]
A (possibly non-convex) polyhedron is a finite union of convex polyhedra.
\end{definition}
We will make use of another, recursive characterization based on the notion that the boundaries of a polytope should be ``flat'' in a topological sense \cite{klee1959some}.

For an affine subspace $K \subseteq \bR^n$, $V_x \subset K$ is a \emph{relative neighbourhood} of $x$ in $K$ if $x \in V_x$ and $V_x$ is open in $K$. For $P \subset K$, the \emph{relative interior} of $P$ in $K$, denoted $\text{relint}_K(P)$, is then the set of points in $P$ which have a relative neighbourhood in $K \cap P$. The notion of ``open in $K$'' is key here: a point that lies on an edge of the unit square does not have a relative neighbourhood in the square, but it has a relative neighbourhood in that edge. The \emph{relative boundary} $\partial_K P$ is defined as the set of points in $P$ not in the relative interior of $P$, that is
\begin{equation*}
    \partial_K P = P \setminus \text{relint}_K(P) .
\end{equation*}
Finally, we recall that $H \subseteq K$ is a \emph{hyperplane} if $H$ is an affine subspace of $K$ of dimension $\text{dim}(K)-1$.
\begin{restatable}{proposition}{polyhedrabyboundaries}
$P$ is a polyhedron in an affine subspace $K \subseteq \bR^n$ if
\vspace{-4mm}
\begin{enumerate}[(i)]
\item $P$ is closed;
\item There are $k \in \bN$ hyperplanes $H_1,..,H_k$ in $K$ whose union contains the boundary of $P$ in $K$:\\ $\partial_K P \subset \cup^{k}_{i=1} H_i$; and
\item For each of these hyperplanes, $P\cap H_i$ is a polyhedron in $H_i$.
\end{enumerate}
\label{prop:tope}
\end{restatable}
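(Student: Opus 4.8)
The plan is to prove this directly by exploiting the hyperplane arrangement determined by $H_1, \dots, H_k$; interestingly, no induction on $\dim(K)$ is required, since hypothesis (iii) already hands us the polyhedral structure on each hyperplane. Write each $H_i = \{x \in K : \ell_i(x) = 0\}$ for an affine functional $\ell_i$ on $K$. Then $K \setminus \bigcup_{i=1}^k H_i = \{x \in K : \ell_i(x) \neq 0 \text{ for all } i\}$ splits into finitely many connected components $O_1, \dots, O_m$, one per realized sign pattern $(\mathrm{sign}\,\ell_i(x))_i$. Each $O_j$ is an intersection of open half-spaces of $K$, hence open and convex, and since $O_j$ is nonempty a short segment argument shows its closure $\overline{O_j}$ is exactly the corresponding intersection of closed half-spaces, i.e.\ a convex polyhedron in $K$.

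The key step is to show that each cell lies entirely inside or entirely outside $P$: for each $j$, either $O_j \subseteq P$ or $O_j \cap P = \emptyset$. I would argue that $P \cap O_j$ is both closed and open in $O_j$. It is closed because $P$ is closed by (i) and $O_j$ carries the subspace topology. It is open because any $x \in P \cap O_j$ avoids $\bigcup_i H_i \supseteq \partial_K P$ by (ii), so $x \in \relint(P)$ and thus admits a relative neighbourhood contained in $P$; intersecting it with $O_j$ gives a neighbourhood of $x$ inside $P \cap O_j$. Since $O_j$ is connected, $P \cap O_j \in \{\emptyset, O_j\}$.

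With this dichotomy, set $J = \{j : O_j \subseteq P\}$ and $Q = \bigcup_{j \in J} \overline{O_j}$. As a finite union of convex polyhedra, $Q$ is a polyhedron; and because $O_j \subseteq P$ with $P$ closed we get $\overline{O_j} \subseteq P$, hence $Q \subseteq P$. The only points of $P$ not yet accounted for lie on the hyperplanes, so I would finish with the decomposition
\begin{equation*}
P = Q \cup \bigcup_{i=1}^k (P \cap H_i),
\end{equation*}
whose $\supseteq$ inclusion is immediate and whose $\subseteq$ inclusion follows by splitting $p \in P$ according to whether it avoids every $H_i$ (then $p \in O_j \subseteq Q$ for some $j \in J$) or lies on some $H_i$. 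By hypothesis (iii) each $P \cap H_i$ is a polyhedron in $H_i$; since $H_i$ is itself cut out of $K$ by finitely many half-spaces, a polyhedron in $H_i$ is also a polyhedron in $K$. Thus $P$ is a finite union of polyhedra, i.e.\ a polyhedron in $K$.

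The main obstacle is the closed-and-open argument in the second paragraph: it is the one place where all three hypotheses conspire (closedness for ``closed in $O_j$'', boundary containment for ``open in $O_j$'', and closedness again for $Q \subseteq P$), and getting the relative-interior bookkeeping right, rather than sloppily working with interiors in $\bR^n$, is where care is required. A secondary technical point is verifying that $\overline{O_j}$ equals the closed convex polyhedron for each nonempty cell, which relies on the convexity and nonemptiness of $O_j$ via the segment argument noted above.
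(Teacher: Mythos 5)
Your proof is correct, and its engine is the same as the paper's: decompose $K$ by the sign patterns of $H_1,\dots,H_k$, show each open cell lies entirely inside or entirely outside $P$, and recover $P$ as the union of the closed cells it contains together with the slices $P \cap H_i$. The genuine differences are structural, and they favour your write-up. First, the paper frames the whole argument as an induction on $\dim(K)$, but --- exactly as you anticipated --- the inductive hypothesis is never actually invoked in its inductive step: hypothesis (iii) is applied directly when the paper concludes that the sets $P \cap H_i$ are polyhedra, so the induction is dead scaffolding and your non-inductive structure loses nothing. Second, for the in-or-out dichotomy you show that $P \cap O_j$ is clopen in the connected cell $O_j$, whereas the paper picks $x \in P$ and $y \notin P$ in the relative interior of a closed cell $P_\delta$ and derives a contradiction from the segment joining them meeting $\partial_K P$ at a point of $\relint(P_\delta)$, which is disjoint from $\cup_{i} H_i$; both arguments are connectedness arguments resting on (i) and (ii), yours being tidier bookkeeping while the paper's makes the role of convexity more visible. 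Third, your segment argument that $\overline{O_j}$ equals the corresponding intersection of closed half-spaces plays precisely the role of the paper's appeal to $\cl(\relint(P_\delta)) = P_\delta$ from \citep[Theorem~3.3]{brondsted2012introduction}. Finally, you explicitly justify that a polyhedron in $H_i$ is also a polyhedron in $K$ (each $H_i$ being an intersection of two closed half-spaces of $K$), a step the paper needs for its concluding sentence but leaves implicit.
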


All proofs may be found in the appendix.

\section{The Space of Value Functions \label{sec:vf}}

We now turn to the main object of our study, the \emph{space of value functions} $\valuefunctions$. The space of value functions is the set of all value functions that are attained by some policy. As noted earlier, this corresponds to the image of $\policies$ under the mapping $f_v$:
\begin{align}
\valuefunctions = f_v(\policies) = \Big\{ f_v(\pi) \cbar \pi \in \policies \Big\}.
\end{align}

As a warm-up, Figure \ref{fig:polytope_examples} depicts the space $\valuefunctions$ corresponding to four 2-state MDPs; each set is made of value functions corresponding to 50,000 policies sampled uniformly at random from $\policies$. The specifics of all MDPs depicted in this work can be found in Appendix \ref{sec:mdps}. 

\begin{figure}[h!]
\includegraphics[width=0.45\textwidth]{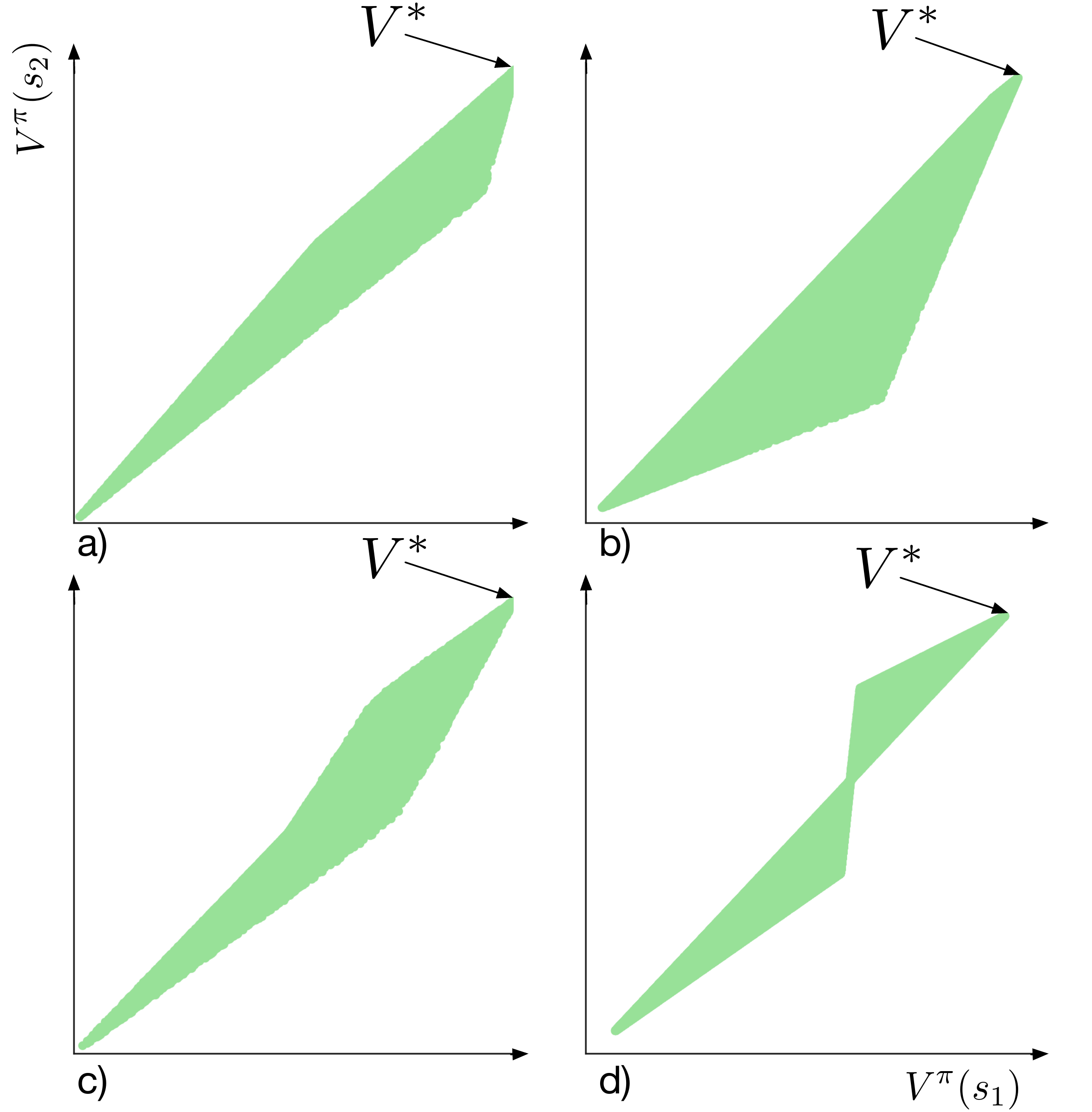}
\caption{Space of value functions for various two-state MDPs.}
\label{fig:polytope_examples}
\end{figure}

While the space of policies $\policies$ is easily described (it is the Cartesian product of $|\states|$ simplices), value function spaces arise as complex polytopes. Of note, they may be non-convex -- justifying our more intricate definition.

In passing, we remark that the polytope gives a clear illustration of the following classic results regarding MDPs \citep[e.g.][]{bertsekas96neurodynamic}:
\begin{itemize}
    \item (Dominance of $V^*$) The optimal value function $V^*$ is the unique dominating vertex of $\valuefunctions$;
    \item (Monotonicity) The edges of $\valuefunctions$ are oriented with the positive orthant;
    \item (Continuity) The space $\valuefunctions$ is connected.
\end{itemize}
The next sections will formalize these and other, less-understood properties of the space of value functions.

\subsection{Basic Shape from Topology}
We begin with a first result on how the functional $f_v$ transforms the space of policies into the space of value functions (Figure \ref{fig:mapping_from_policies_to_value_functions}). Recall that 
\begin{equation*}
    f_v(\pi) = (I - \gamma \Ppi)^{-1} \rpi.
\end{equation*}
Hence $f_v$ is infinitely differentiable everywhere on $\policies$ (Appendix \ref{sec:supp_results}). The following is a topological consequence of this property, along with the fact that $\policies$ is a compact and connected set.
\begin{restatable}{lemma}{lemmacompact}
The space of value functions $\valuefunctions$ is compact and connected.
\label{lm:compact}
\end{restatable}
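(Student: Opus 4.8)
The plan is to deduce compactness and connectedness of $\valuefunctions$ directly from the fact that it is the continuous image of a compact, connected set, using the standard topological facts that both properties are preserved under continuous maps. Concretely, I would first establish that the domain $\policies$ has these properties, then argue that $f_v$ is continuous, and finally invoke the preservation theorems.

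\textbf{Step 1: The domain is compact and connected.} The policy space $\policies = \mathcal{P}(\actions)^\states$ is a Cartesian product of $|\states|$ copies of the probability simplex $\mathcal{P}(\actions)$ over $\actions$. Each simplex is a closed and bounded subset of $\reals^{|\actions|}$, hence compact by Heine--Borel, and it is convex, hence connected. A finite product of compact spaces is compact (Tychonoff, or trivially the finite case), and a finite product of connected spaces is connected. Therefore $\policies$ is both compact and connected.

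\textbf{Step 2: The map $f_v$ is continuous.} The excerpt already notes that $f_v(\pi) = (I - \gamma \Ppi)^{-1} \rpi$ is infinitely differentiable everywhere on $\policies$ (deferred to the appendix), which I would cite directly; smoothness immediately implies continuity. If one wanted a self-contained argument, the entries of $\Ppi$ and $\rpi$ depend linearly (hence continuously) on $\pi$, and the matrix inverse is a continuous function of its entries wherever it exists. The key point making the inverse well-defined is that $\gamma \in [0,1)$ forces the spectral radius of $\gamma \Ppi$ to be strictly less than one (since $\Ppi$ is stochastic), so $I - \gamma \Ppi$ is invertible for every $\pi \in \policies$ and the composition is continuous throughout.

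\textbf{Step 3: Conclude by image preservation.} Since $\valuefunctions = f_v(\policies)$ is the image of the compact connected set $\policies$ under the continuous map $f_v$, and since the continuous image of a compact set is compact and the continuous image of a connected set is connected, it follows that $\valuefunctions$ is compact and connected, which is the claim.

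I expect essentially no serious obstacle here: this is a short topological consequence rather than a deep structural result, which matches the lemma's role as a warm-up. The only point requiring any care is Step 2, namely justifying that $f_v$ is genuinely well-defined and continuous on all of $\policies$, which hinges on the invertibility of $I - \gamma \Ppi$ uniformly over policies. Since this is already asserted (with smoothness) in the excerpt and rests only on $\gamma < 1$ together with stochasticity of $\Ppi$, I would simply cite it rather than reprove it, keeping the proof a clean three-line invocation of standard preservation theorems.
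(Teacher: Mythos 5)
Your proposal is correct and follows essentially the same route as the paper's proof: establish that $\policies$ is compact (closed and bounded in a finite-dimensional space) and connected (convex, or a product of convex simplices), note that $f_v$ is continuous by citing its smoothness (the paper's Lemma~\ref{lm:differentiable}), and conclude via the standard preservation of compactness and connectedness under continuous images. The only cosmetic difference is that you also sketch a self-contained continuity argument via the spectral radius of $\gamma \Ppi$, which the paper instead handles through the adjugate/determinant formula in its appendix lemma.
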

The interested reader may find more details on this topological argument in \cite{engelking1989general}.

\subsection{Policy Agreement and Policy Determinism}

Two notions play a central role in our analysis:
\textit{policy agreement} and \textit{policy determinism}.

\begin{definition}[Policy Agreement]
Two policies $\pi_1, \pi_2$ \textit{agree} on states $s_1,.., s_k \in \states$ if $\pi_1(\cdot \cbar s_i) = \pi_2(\cdot \cbar s_i)$ for each $s_i$, $i = 1, \dots, k$.
\end{definition}
For a given policy $\pi$, we denote by $Y^\pi_{s_1, \dots, s_k} \subseteq \policies$ the set of policies which agree with $\pi$ on $s_1, \dots, s_k$; we will also write $\agreeone$ to describe the set of policies that agree with $\pi$ on all states except $s$. Note that policy agreement does not imply disagreement; in particular, $\pi \in Y^\pi_{\ssubspace}$ for any subset of states $\ssubspace \subset \states$.
\begin{definition}[Policy Determinism]
A policy $\pi$ is 
\begin{enumerate}[(i)]
    \item $s$-\textit{deterministic} for $s \in \states$ if $\pi(a \cbar s) \in \{0, 1\}$.
    \item \textit{semi-deterministic} if it is $s$-deterministic for at least one $s \in \states$.
    \item \textit{deterministic} if it is $s$-deterministic for all states $s \in \states$.
\end{enumerate}
\end{definition}
We will denote by $\deter$ the set of semi-deterministic policies that take action $a$ when in state $s$.

\begin{restatable}{lemma}{lmzeros}
Consider two policies $\pi_1, \pi_2$ that agree on $s_1, \dots, s_k \in \states$. Then the vector $r_{\pi_1} - r_{\pi_2}$ has zeros in the components corresponding to $s_1, \dots, s_k$ and the matrix $P^{\pi_1} - P^{\pi_2}$ has zeros in the corresponding rows.
\label{lm:zeros}
\end{restatable}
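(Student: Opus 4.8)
The plan is to reduce both claims to the single observation that the quantities $\rpi(s)$ and $\Ppi(\cdot \cbar s)$ are \emph{local}: each depends on $\pi$ only through its action distribution $\pi(\cdot \cbar s)$ at the single state $s$. Once this locality is made explicit, agreement at $s_1, \dots, s_k$ forces these quantities to coincide for $\pi_1$ and $\pi_2$ at exactly those states, which is precisely the asserted pattern of zeros.

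First I would write out the two definitions componentwise. By definition of the expected-reward vector, the $s$-th component is $\rpi(s) = \sum_{a \in \actions} \pi(a \cbar s)\, \rewards(s,a)$, and by definition of the induced transition matrix the $(s,s')$ entry is $\Ppi(s' \cbar s) = \sum_{a \in \actions} \pi(a \cbar s)\, \transitions(s' \cbar s, a)$. Both expressions are linear functionals of the probability vector $\pi(\cdot \cbar s)$, with coefficients $\rewards(s,\cdot)$ and $\transitions(s' \cbar s, \cdot)$ that are fixed by the MDP and do not involve the policy. In particular, the $s$-th entry of $\rpi$ and the $s$-th row of $\Ppi$ are determined entirely by $\pi(\cdot \cbar s)$.

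Next I would fix an agreement state $s_i$ and substitute. Since $\pi_1$ and $\pi_2$ agree on $s_i$ we have $\pi_1(\cdot \cbar s_i) = \pi_2(\cdot \cbar s_i)$, so term by term the sums defining $r_{\pi_1}(s_i)$ and $r_{\pi_2}(s_i)$ are equal, giving a zero in the $s_i$-th component of $r_{\pi_1} - r_{\pi_2}$; the identical substitution in the transition sums shows $P^{\pi_1}(s' \cbar s_i) = P^{\pi_2}(s' \cbar s_i)$ for every $s'$, so the whole $s_i$-th row of $P^{\pi_1} - P^{\pi_2}$ vanishes. Ranging over $i = 1, \dots, k$ yields both claims.

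Honestly this lemma is a direct unfolding of definitions, so I do not anticipate a genuine obstacle; the only point requiring care is bookkeeping, namely remembering that a state indexes a \emph{row} of the transition matrix (it is the conditioning variable), so that agreement at $s_i$ controls the $s_i$-th row rather than column — the columns, which correspond to destination states $s'$, need not match. I would state this convention explicitly to avoid any ambiguity.
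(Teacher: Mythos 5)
Your proof is correct and is essentially the paper's own argument: the paper writes $r_{\pi} = \pi r$ and $P^{\pi} = \pi P$ in block-matrix form and notes that agreement makes the corresponding rows of the policy matrices identical, which is exactly your componentwise observation that $r_\pi(s)$ and $P^\pi(\cdot \, | \, s)$ depend on $\pi$ only through $\pi(\cdot \, | \, s)$. Your explicit remark that agreement controls rows (not columns) of the transition matrix is a nice clarification but does not change the substance.
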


This lemma highlights that when two policies agree on a given state they have the same immediate dynamic on this state, i.e. they get the same expected reward, and have the same next state transition probabilities. Lemma \ref{lm:freedom} in Section \ref{sec:line} will be a direct consequence of this property.

\subsection{Value Functions and Policy Agreement\label{sec:line}}
We begin our characterization by considering the subsets of value functions that are generated when the action probabilities are kept fixed at certain states, that is: when we restrict the functional $f_v$ to the set of policies that agree with some base policy $\pi$ on these states.

Something special arises when we keep the probabilities fixed at all but state $s$: the functional $f_v$ draws a line segment which is oriented in the positive orthant (that is, one end dominates the other end). Furthermore, the extremes of this line segment can be taken to be $s$-deterministic policies. This is the main result of this section, which we now state more formally.
\begin{restatable}{theorem}{linetheorem}[Line Theorem]
Let $s$ be a state and $\pi$, a policy.
Then there are two $s\text{-deterministic}$ policies in $\agreeone$, denoted $\pi_l, \pi_u$, which
bracket the value of all other policies $\pi' \in \agreeone$: \begin{equation*}
    f_v(\pi_l) \lvec f_v(\pi') \lvec f_v(\pi_u).
\end{equation*}
Furthermore, the image of $f_v$ restricted to $\agreeone$ is a line segment, and the following three sets are equivalent:
\begin{enumerate}[(i)]
\item $f_v \big (\agreeone \big )$,
\item $\{ f_v(\alpha \pi_l + (1 - \alpha) \pi_u) \cbar \alpha \in [ 0, 1] \}$,
\item $\{ \alpha f_v(\pi_l) + (1 - \alpha) f_v(\pi_u) \cbar \alpha \in [ 0, 1] \}$ .
\end{enumerate}
\label{th:line}
\end{restatable}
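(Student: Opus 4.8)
The plan is to reduce the theorem to understanding how $V^\pi$ moves as we vary only the action distribution $\mu := \pi(\cdot \cbar s)$ at the single state $s$, holding the policy fixed elsewhere. For two policies $\pi, \pi' \in \agreeone$ I would start from the Bellman identities $(\eye - \gamma \Ppi)V^\pi = \rpi$ and its analogue for $\pi'$, subtract, and rearrange to obtain the performance-difference identity
\begin{equation*}
V^{\pi} - V^{\pi'} = (\eye - \gamma \Ppi)^{-1}\big[(\rpi - r_{\pi'}) + \gamma(\Ppi - P^{\pi'})V^{\pi'}\big].
\end{equation*}
Since $\pi$ and $\pi'$ agree on every state except $s$, Lemma \ref{lm:zeros} guarantees that $\rpi - r_{\pi'}$ and $\Ppi - P^{\pi'}$ vanish in every coordinate and row other than $s$, so the bracketed vector is a scalar multiple of $e_s$, the indicator vector of state $s$. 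Hence $V^\pi - V^{\pi'}$ is a scalar multiple of $\col := (\eye - \gamma \Ppi)^{-1} e_s$, the $s$-th column of the resolvent.

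The key step is to show this column points in a direction that does not depend on which policy of $\agreeone$ we chose. I would argue this through the adjugate: $(\eye - \gamma \Ppi)^{-1} = \adj(\eye - \gamma \Ppi)/\det(\eye - \gamma \Ppi)$, and the $s$-th column of $\adj(\eye - \gamma \Ppi)$ is built from cofactors obtained by deleting row $s$ of $\eye - \gamma \Ppi$. Because all policies in $\agreeone$ share every row but the $s$-th, these cofactors, and therefore the entire $s$-th column of the adjugate, are identical across $\agreeone$; call this fixed vector $v$. Since $\det(\eye - \gamma \Ppi) > 0$ (the eigenvalues of $\gamma \Ppi$ lie strictly inside the unit disk) and $(\eye - \gamma \Ppi)^{-1} = \sum_{k \ge 0}\gamma^k (\Ppi)^k \gvec 0$, we get $\col = v / \det(\eye - \gamma\Ppi)$ with $v \gvec 0$ and $v_s > 0$, so $v \neq 0$. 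Consequently every value function in $f_v(\agreeone)$ equals a fixed base value plus a scalar multiple of the single nonnegative direction $v$, i.e. the image lies on one line.

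It remains to pin down the endpoints and the three identities. Fixing a base policy and writing $c(\mu)$ for the scalar coordinate of $V^{\pi_\mu}$ along $v$, the first display shows $c(\mu)$ is the ratio of the $s$-th component of the bracket (affine in $\mu$) to $\det(\eye - \gamma P^{\pi_\mu})$ (also affine in $\mu$, since the determinant is linear in row $s$), i.e. a linear-fractional function of $\mu$ with positive denominator over the simplex $\mathcal{P}(\actions)$. Such functions are quasilinear, so both their maximum and their minimum over the simplex are attained at vertices, which are exactly the $s$-deterministic distributions; these yield $\pi_u$ and $\pi_l$. Because $v \gvec 0$, the scalar ordering of $c$ transfers to the vectorial order, giving $f_v(\pi_l) \lvec f_v(\pi') \lvec f_v(\pi_u)$ and the positive-orthant orientation. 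Continuity of $c$ on the connected simplex makes it surjective onto $[c(\mu_l), c(\mu_u)]$, so $f_v(\agreeone)$ is exactly the segment between $f_v(\pi_l)$ and $f_v(\pi_u)$, proving (i) $=$ (iii); restricting $\mu$ to the edge $[\mu_l, \mu_u]$, which is precisely what mixing $\pi_l$ and $\pi_u$ produces, keeps $c$ linear-fractional hence monotone along the edge, so it still sweeps the whole interval, giving (i) $=$ (ii).

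The two structural facts are the main obstacles, and I expect the adjugate/cofactor argument establishing that $\col$ has a policy-independent direction $v$ to be the crux: it is what upgrades ``the pairwise differences are collinear'' to ``the whole image lies on one common line.'' The second delicate point is justifying that the extrema of the linear-fractional coordinate land on deterministic policies, for which quasilinearity (equivalently, monotonicity along each edge of the simplex) is the clean tool rather than any direct optimization over $\mathcal{P}(\actions)$.
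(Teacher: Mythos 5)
Your proposal is correct, and it shares the paper's skeleton only at the entry point: the Bellman difference identity (Equation~\ref{eq:sev} in the appendix) together with Lemma~\ref{lm:zeros}, which localize value differences within $\agreeone$ to the $s$-th column of the resolvent. After that, your two key steps genuinely diverge from the paper's. For collinearity, the paper proves $Span(C^{\pi}_s)=Span(C^{\pi'}_s)$ for $\pi,\pi'\in\agreeone$ by an orthogonality-and-dimension-count argument (Lemma~\ref{lm:matching}); you instead observe that the $s$-th column of $\adj(\eye-\gamma\Ppi)$ is assembled from cofactors that delete row $s$, hence is one fixed vector $v$ across all of $\agreeone$ --- a sharper statement that also hands you nonnegativity of the direction and positivity of the scaling for free. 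For the extremes, the paper first obtains maximal and minimal value functions from compactness plus the total order (Lemma~\ref{lm:gproperties}(ii)), and then upgrades them to $s$-deterministic policies by decomposing a non-deterministic extreme as a mixture and invoking the strict-monotonicity dichotomy of Lemma~\ref{lm:gproperties}(iv), whose proof runs through the Sherman--Morrison formula and uniqueness of the Bellman fixed point; you bypass all of this by writing the scalar coordinate as the linear-fractional function $c(\mu)=b(\mu)/\det(\eye-\gamma P^{\pi_\mu})$ of $\mu\in\mathcal{P}(\actions)$ (numerator and denominator both affine in $\mu$, denominator positive) and invoking quasilinearity to place both extrema at vertices of the simplex, i.e.\ at $s$-deterministic policies. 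Your route is more elementary and explicit, and it treats arbitrary action sets in a single stroke; the trade-off is that the paper's longer path builds machinery it needs downstream --- the equality $f_v(\agree)=\valuefunctions\cap\affinesev$ of Lemma~\ref{lm:freedom} and the strictly monotone interpolation $\rho$ are both reused in Theorem~\ref{th:neighbor} and Theorem~\ref{th:polytope} --- which your self-contained argument does not supply. (One small remark: for the equivalence of (i) and (ii) you do not actually need monotonicity of $c$ along the edge between $\pi_l$ and $\pi_u$; continuity and the intermediate value theorem already suffice, which is essentially how the paper argues via $\rho$.)
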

The second part of Theorem \ref{th:line} states that one can generate the set of value functions $f_v(\agreeone)$ in two ways: either by drawing the line segment in value space, $f_v(\pi_l)$ to $f_v(\pi_u)$, or drawing the line segment in policy space, from $\pi_l$ to $\pi_u$ and then mapping to value space. Note that this result is a consequence from the Sherman-Morrison formula, which has been used in reinforcement learning for efficient sequential matrix inverse estimation \cite{bradtke1996linear}. While somewhat technical, this characterization of line segment is needed to prove some of our later results. Figure \ref{fig:line_drawn} illustrates the path drawn by interpolating between two policies that agree on state $s_2$.

\begin{figure}[h]
\includegraphics[width=0.45\textwidth]{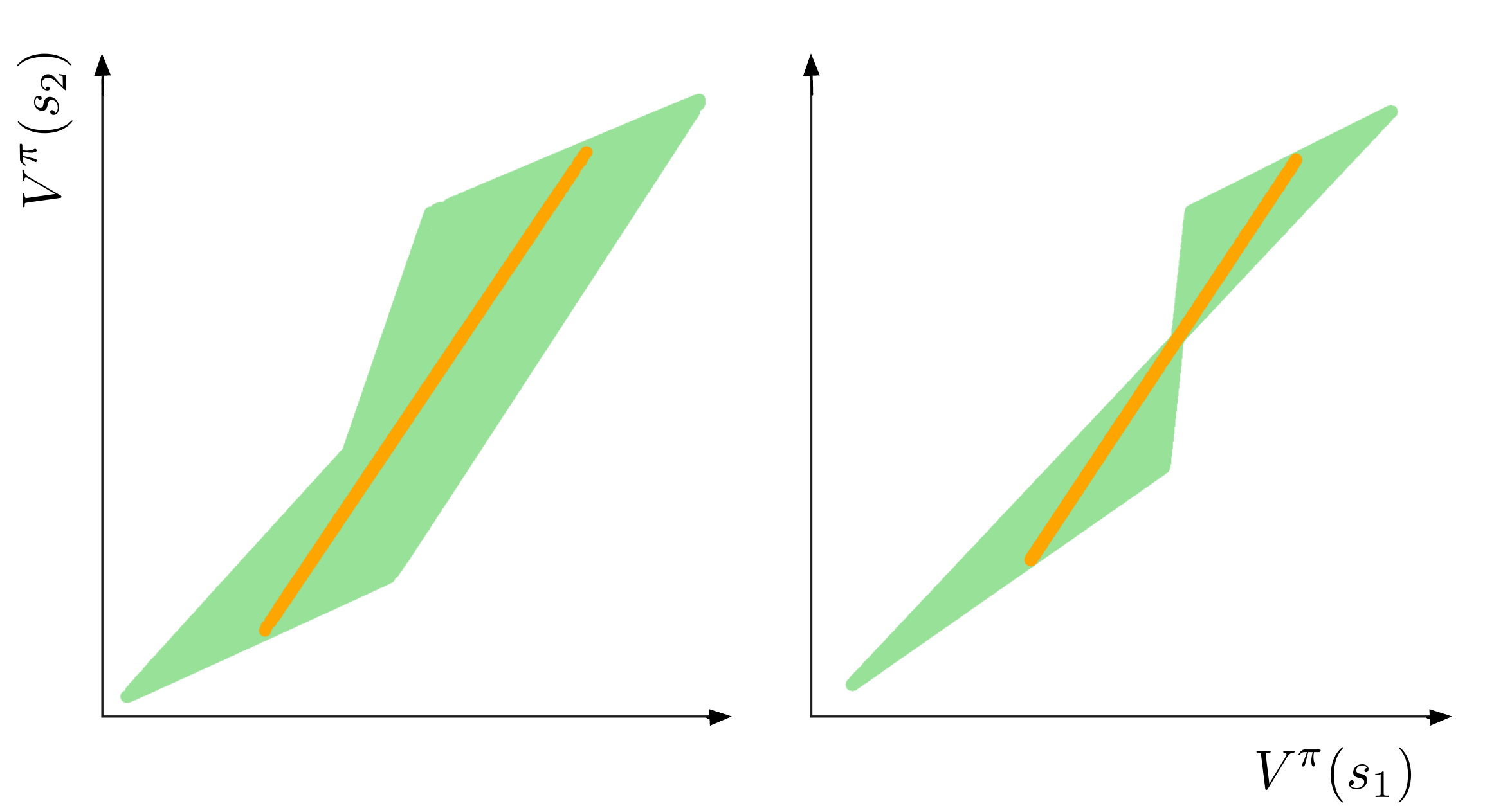}
\caption{Illustration of Theorem \ref{th:line}. The orange points are the value functions of mixtures of policies that agree everywhere but one state.}
\label{fig:line_drawn}
\end{figure}

Theorem \ref{th:line} depends on two lemmas, which we now provide in turn. Consider a policy $\pi$ and $k$ states $s_1, \dots, s_k$, and write $C_{k+1}^\pi, \dots, C_{|\states|}^\pi$ for the columns of the matrix $(I - \gamma \Ppi)^{-1}$ corresponding to states \emph{other} than $s_1, \dots, s_k$. Define the affine vector space
\begin{equation*}
    H^\pi_{s_1, \dots, s_k} = V^\pi + Span(C_{k+1}^\pi, \dots, C_{|\states|}^\pi) .
\end{equation*}
\begin{restatable}{lemma}{linelemmaone}
Consider a policy $\pi$ and $k$ states $s_1, \dots, s_k$. Then the value functions generated by $\agree$ are contained in the affine vector space $\affinesev$:
\begin{align*}
    f_v(\agree) = \valuefunctions \cap \affinesev.
\end{align*}
\label{lm:freedom}
\end{restatable}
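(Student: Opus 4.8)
The plan is to prove the set equality by establishing the two inclusions separately, relying on Lemma~\ref{lm:zeros} and the fixed-point characterization of value functions. Throughout I write $A = \{s_1, \dots, s_k\}$ for the agreement states and $B = \states \setminus A$, so that $C^\pi_{k+1}, \dots, C^\pi_{|\states|}$ are precisely the columns of $(\eye - \gamma \Ppi)^{-1}$ indexed by $B$. The single most useful reformulation I would set up first is that $Span(C^\pi_{k+1}, \dots, C^\pi_{|\states|})$ coincides with the set of vectors $(\eye - \gamma\Ppi)^{-1}u$ as $u$ ranges over vectors supported on $B$ (zero on $A$), since $(\eye - \gamma\Ppi)^{-1}e_j = C^\pi_j$. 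Equivalently, $v \in \affinesev$ iff $[(\eye - \gamma\Ppi)v]_s = \rpi(s)$ for every $s \in A$; this follows by left-multiplying $v - V^\pi$ by $(\eye - \gamma\Ppi)$ and using $(\eye - \gamma\Ppi)V^\pi = \rpi$. This linear-equation description of the affine space is what drives both inclusions.

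For the inclusion $f_v(\agree) \subseteq \valuefunctions \cap \affinesev$, membership in $\valuefunctions$ is immediate since $\agree \subseteq \policies$. To show membership in $\affinesev$, I would take any $\pi' \in \agree$, subtract the two Bellman equations, and rewrite the transition term as $\gamma\Ppi(V^{\pi'} - V^\pi) + \gamma(P^{\pi'} - \Ppi)V^{\pi'}$. Rearranging yields
\begin{equation*}
(\eye - \gamma\Ppi)(V^{\pi'} - V^\pi) = (r_{\pi'} - \rpi) + \gamma(P^{\pi'} - \Ppi)V^{\pi'}.
\end{equation*}
By Lemma~\ref{lm:zeros} the right-hand side vanishes on every component in $A$, so $V^{\pi'} - V^\pi$ is the image under $(\eye - \gamma\Ppi)^{-1}$ of a vector supported on $B$, hence lies in $Span(C^\pi_{k+1}, \dots, C^\pi_{|\states|})$; thus $V^{\pi'} \in \affinesev$.

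For the reverse inclusion I would argue by explicit construction. Given $v \in \valuefunctions \cap \affinesev$, pick a policy $\tilde\pi$ with $V^{\tilde\pi} = v$, and define $\pi'$ to agree with $\pi$ on $A$ and with $\tilde\pi$ on $B$; then $\pi' \in \agree$ by construction, and it remains to verify $V^{\pi'} = v$. By uniqueness of the Bellman fixed point this amounts to checking $r_{\pi'} + \gamma P^{\pi'} v = v$ componentwise. For $s \in B$, the $s$-th component of $r_{\pi'}$ and the $s$-th row of $P^{\pi'}$ coincide with those of $\tilde\pi$, so the equation at $s$ reduces to $[r_{\tilde\pi} + \gamma P^{\tilde\pi}v]_s = v(s)$, which holds since $v = V^{\tilde\pi}$. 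For $s \in A$, they coincide with those of $\pi$, so the equation reduces to $[\rpi + \gamma\Ppi v]_s = v(s)$, which is exactly the linear-equation description of $\affinesev$ from the first step. Hence $v = V^{\pi'} \in f_v(\agree)$.

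I expect the reverse inclusion to be the main obstacle. The forward inclusion is a direct algebraic manipulation once Lemma~\ref{lm:zeros} is invoked, whereas the reverse direction requires recognizing that a value function landing in the affine slice can be \emph{re-realized} by a policy agreeing with $\pi$ on $A$; the non-obvious point is that splicing $\pi$ and $\tilde\pi$ across $A$ and $B$ preserves the value exactly. That verification hinges entirely on the affine-space-as-Bellman-consistency characterization established at the outset, which is why I would prove that equivalence carefully before attempting either inclusion.
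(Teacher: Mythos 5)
Your proof is correct. The forward inclusion is essentially the paper's own argument: subtract the two Bellman equations, factor out $(\eye - \gamma \Ppi)$, and invoke Lemma~\ref{lm:zeros} to conclude that the residual vector vanishes on the agreement states, so $V^{\pi'} - V^{\pi}$ lies in the span of the columns $\col_{k+1},\dots,\col_{|\states|}$. The reverse inclusion, however, takes a genuinely different route. The paper builds the same spliced policy $\pi'$ (copying $\pi$ on $s_1,\dots,s_k$ and the witness policy $\hat\pi$ elsewhere) but then argues linear-algebraically: it shows that $V^{\hat\pi} - V^{\pi'}$ lies both in $Span(\colprime_{k+1},\dots,\colprime_{|\states|})$ --- via an auxiliary result (Lemma~\ref{lm:matching}) asserting that this span is the same for every policy in $\agree$ --- and in $Span(\colprime_{1},\dots,\colprime_{k})$, because $\hat\pi$ and $\pi'$ agree on the remaining states; since the columns of the invertible matrix $(\eye - \gamma P^{\pi'})^{-1}$ are linearly independent, these two spans meet only at $0$, forcing $V^{\hat\pi} = V^{\pi'}$. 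You instead first characterize $\affinesev$ as the set of vectors satisfying $\pi$'s Bellman equation at the agreement states (legitimate, since $(\eye-\gamma\Ppi)$ is a bijection mapping the direction space onto the vectors supported off $\{s_1,\dots,s_k\}$), and then verify componentwise that $v$ is a fixed point of $\bellop^{\pi'}$, concluding by uniqueness of that fixed point --- a fact the paper itself uses elsewhere (proof of Lemma~\ref{lm:gproperties}). Your route is more self-contained: it needs neither Lemma~\ref{lm:matching} nor the span-intersection argument, and the Bellman-consistency description of the affine slice makes the splicing verification nearly automatic. What the paper's argument buys in exchange is the span-matching fact itself, which makes explicit that $Span(\col_{k+1},\dots,\col_{|\states|})$ is an invariant of the set $\agree$ rather than of the particular representative $\pi$; in your proof this invariance is true but remains implicit.
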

Put another way, Lemma \ref{lm:freedom} shows that if we fix the policies on $k$ states, the induced space of value function loses at least $k$ degrees of freedom, specifically that it lies in a $|\states|-k$ dimensional affine vector space.

For $k = |\states|-1$, Lemma \ref{lm:freedom} implies that the value functions lie on a line -- however, the following is necessary to expose the full structure of $\valuefunctions$ within this line.
\begin{restatable}{lemma}{lemmagproperties}
Consider the ensemble $\agreeone$ of policies that agree with a policy $\pi$ everywhere but on $s \in \states$. For $\pi_0, \pi_1 \in \agreeone$ define the function $g:[0, 1]  \rightarrow \valuefunctions$
\begin{equation*}
    g(\mu) = f_v(\mu\pi_1 + (1-\mu) \pi_0).
\end{equation*}
Then the following hold regarding $g$:
\begin{enumerate}[(i)]
\item $g$ is continuously differentiable;
\item (Total order) $g(0) \lvec g(1)$ or $g(0) \gvec g(1)$;
\item If $g(0) = g(1)$  then $g(\mu)=g(0)$, $\mu \in [0, 1]$;
\item (Monotone interpolation) If $g(0) \neq g(1)$ there is a $\rho: [0, 1] \rightarrow \reals$ such that $g(\mu) = \rho(\mu) g(1) + (1 - \rho(\mu)) g(0)$, and $\rho$ is a strictly monotonic rational function of $\mu$.
\end{enumerate}
\label{lm:gproperties}
\end{restatable}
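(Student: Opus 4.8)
The plan is to exploit the fact that $\pizero$ and $\pione$ differ only at state $s$, which turns $g$ into a rank-one perturbation problem solvable in closed form via the Sherman--Morrison formula. Write $\pi_\mu := \mu\pione + (1-\mu)\pizero$; since all three policies agree on $\states\setminus\{s\}$, so does $\pi_\mu$, and both $P^{\pi_\mu}$ and $r_{\pi_\mu}$ are affine in $\mu$. By Lemma~\ref{lm:zeros} the matrix $P^{\pione}-P^{\pizero}$ is supported on row $s$ and $r_{\pione}-r_{\pizero}$ on component $s$, so I can write $P^{\pi_\mu}=P^{\pizero}+\mu\,e_s v^\top$ and $r_{\pi_\mu}=r_{\pizero}+\mu\,\delta_r\, e_s$ for a fixed vector $v$, scalar $\delta_r$, and $e_s$ the $s$-th basis vector. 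Consequently $\eye-\gamma P^{\pi_\mu}=A-\gamma\mu\,e_s v^\top$ is a rank-one update of $A:=\eye-\gamma P^{\pizero}$.

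First I would apply Sherman--Morrison to $(\eye-\gamma P^{\pi_\mu})^{-1}$ and multiply by $r_{\pi_\mu}$. Writing $w:=A^{-1}e_s$, $\beta:=v^\top w$, and $\alpha:=v^\top g(0)$, a short computation in which the terms quadratic in $\mu$ cancel yields the clean form
\[
g(\mu)=g(0)+\phi(\mu)\,w,\qquad \phi(\mu)=\frac{\kappa\,\mu}{1-\gamma\beta\mu},\quad \kappa:=\delta_r+\gamma\alpha .
\]
To make this rigorous I must ensure the denominator never vanishes on $[0,1]$: since every $\pi_\mu$ is a valid policy, $\eye-\gamma P^{\pi_\mu}$ is invertible, and the matrix determinant lemma gives $\det(\eye-\gamma P^{\pi_\mu})=\det(A)(1-\gamma\beta\mu)$, so $1-\gamma\beta\mu\neq0$ throughout; as it equals $1$ at $\mu=0$ and is continuous, it stays strictly positive. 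This already proves (i), since $\phi$ is rational with non-vanishing denominator and $g$ is affine in $\phi$, hence $C^\infty$.

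The remaining claims follow by reading off properties of the scalar function $\phi$ and the fixed direction $w$. For (ii), note $w=A^{-1}e_s$ is the $s$-th column of $(\eye-\gamma P^{\pizero})^{-1}=\sum_{k\ge0}\gamma^k (P^{\pizero})^k$, which is entrywise nonnegative because $P^{\pizero}$ is stochastic; hence $w\gvec0$, and since $g(1)-g(0)=\phi(1)\,w$, the sign of $\phi(1)$ forces either $g(0)\lvec g(1)$ or $g(0)\gvec g(1)$. For (iii), if $g(0)=g(1)$ then $\phi(1)=0$; as $1-\gamma\beta>0$ this forces $\kappa=0$, whence $\phi\equiv0$ and $g(\mu)=g(0)$ for all $\mu$. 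For (iv), when $g(0)\neq g(1)$ we have $\kappa\neq0$, so setting $\rho(\mu):=\phi(\mu)/\phi(1)=\mu(1-\gamma\beta)/(1-\gamma\beta\mu)$ gives $g(\mu)=\rho(\mu)g(1)+(1-\rho(\mu))g(0)$, a rational function of $\mu$ with derivative $\rho'(\mu)=(1-\gamma\beta)/(1-\gamma\beta\mu)^2$ of constant sign, hence strictly monotonic.

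The only genuinely delicate points are the two I have flagged: proving the Sherman--Morrison/determinant denominator $1-\gamma\beta\mu$ stays positive on the whole segment (so that the rank-one inverse is valid and $g$ is smooth), and verifying the cancellation of the $\mu^2$ terms that collapses $\phi$ to a M\"obius function with affine denominator --- this cancellation is what makes $g([0,1])$ an honest line segment rather than a curved rational arc. The nonnegativity of $(\eye-\gamma P^{\pizero})^{-1}$ needed for the total order in (ii) is standard but worth stating explicitly.
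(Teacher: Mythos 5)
Your proof is correct, and although it rests on the same core tool as the paper's --- Sherman--Morrison applied to a rank-one perturbation --- the architecture of the argument is genuinely different. The paper treats the four claims separately: (i) by composing smooth maps, (ii) by invoking Lemma~\ref{lm:freedom} to place all of $f_v(\agreeone)$ on a line directed by a nonnegative column of $(\eye-\gamma P^{\pi_1})^{-1}$, (iii) by an algebraic identity relating $V^{\pi_0}-V^{\pi_\mu}$ to $V^{\pi_0}-V^{\pi_1}$, and (iv) by Sherman--Morrison to obtain rationality of $\rho$, followed by a proof of strict monotonicity \emph{by contradiction}: a continuous non-injective $\rho$ would yield two distinct mixtures with equal value functions, and a Bellman fixed-point argument then forces $g(0)=g(1)$. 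You instead derive one master formula $g(\mu)=g(0)+\frac{\kappa\mu}{1-\gamma\beta\mu}\,w$ with $w=(\eye-\gamma P^{\pi_0})^{-1}e_s$, from which all four items are read off; in particular, monotonicity comes from the explicit derivative $\rho'(\mu)=(1-\gamma\beta)/(1-\gamma\beta\mu)^2>0$ rather than from contradiction. Your route buys three things: a degree-$(1,1)$ closed form for $\rho$ (the paper's expression $\mu - \frac{\gamma\mu(1-\mu)}{1+\omega_{\pi_1,\pi_0}\gamma(1-\mu)}\frac{\beta_{\pi_0,\pi_1}}{\alpha_{\pi_0,\pi_1}}$ is quadratic-over-affine and left unsimplified), the slightly stronger conclusion that $\rho$ is strictly \emph{increasing} with $\rho(0)=0$, $\rho(1)=1$, and an explicit check --- via the matrix determinant lemma plus continuity --- that the Sherman--Morrison denominator never vanishes on $[0,1]$, a point the paper's own proof silently assumes for its denominator $1+\omega_{\pi_1,\pi_0}\gamma(1-\mu)$. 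What the paper's route buys in exchange is integration with its broader machinery (its argument for (ii) re-uses Lemma~\ref{lm:freedom}, which is needed again later, e.g.\ in Theorem~\ref{th:neighbor}), and a monotonicity argument that does not depend on the exact form of $\rho$, only on continuity and uniqueness of Bellman fixed points.
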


The result (ii) in Lemma \ref{lm:gproperties} was established in \cite{mansour1999complexity} for deterministic policies. Note that in general, $\rho(\mu) \ne \mu$ in the above, as the following example demonstrates.

\begin{restatable}{example}{exnonaffine}
Suppose $\states = \{s_1, s_2\}$, with $s_2$ terminal with no reward associated to it, $\actions=\{a_1, a_2\}$. The transitions and rewards are defined by $P(s_2|s_1, a_2)= 1, P(s_1, |s_1, a_1) = 1, r(s_1, a_1) = 0, r(s_1, a_2)=1$. Define two deterministic policies $\pi_1, \pi_2$ such that $\pi_1(a_1|s_1)=1, \pi_2(a_2|s_1)=1$. We have
$$f_v((1 - \mu)\pi_1 + \mu \pi_2) = \begin{bmatrix} \frac{\mu}{1 - \gamma(1-\mu)} \\ 0 \end{bmatrix}.$$
\label{ex:nonaffine}
\end{restatable}

Remarkably, Theorem \ref{th:line} shows that policies agreeing on all but one state draw line segments irrespective of the size of the action space; this may be of particular interest in the context of continuous action problems. Second, this structure is unique, in the sense that the paths traced by interpolating between two arbitrary policies may be neither linear, nor monotonic (Figure \ref{fig:nonlinear} depicts two examples).

\begin{figure}[h]
\includegraphics[width=0.45\textwidth]{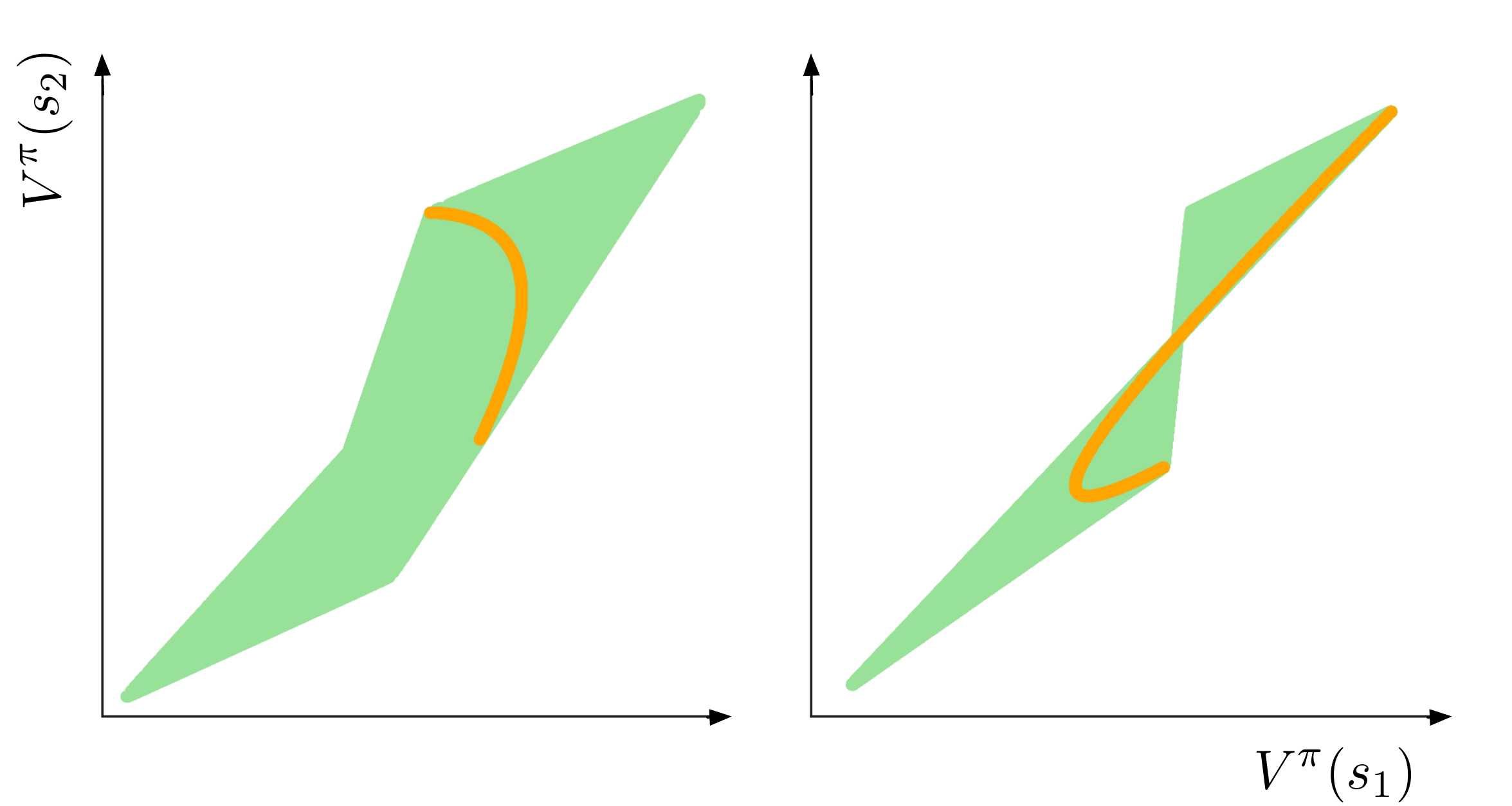}
\caption{Value functions of mixtures of two policies in the general case. The orange points describe the value functions of mixtures of two policies.}
\label{fig:nonlinear}
\end{figure}

\subsection{Convex Consequences of Theorem \ref{th:line}} 

Some consequences arise immediately from Theorem \ref{th:line}. First, the result suggests a recursive application from the value function $V^\pi$ of a policy $\pi$ into its \textit{deterministic constituents}.
\begin{restatable}{corollary}{thmconvexhull}
For any set of states $s_1,.., s_k \in \states$ and a policy $\pi$, $V^{\pi}$ can be expressed as a convex combination of value functions of $\{s_1,.., s_k\}$-deterministic policies. In particular,  $\valuefunctions$ is included in the convex hull of the value functions of deterministic policies.
\label{corol:hull}
\end{restatable}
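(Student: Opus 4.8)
The plan is to prove the statement by induction on the number $k$ of states at which determinism is forced, using Theorem~\ref{th:line} as the engine of the inductive step. It is cleanest to establish a slightly stronger claim that is better suited to induction: for every policy $\pi$ and every list of states $s_1, \dots, s_k$, the value function $V^\pi$ can be written as a convex combination $\sum_i \lambda_i f_v(\pi_i)$ in which each $\pi_i$ is $\{s_1, \dots, s_k\}$-deterministic. The concluding ``in particular'' assertion then follows by taking $k = |\states|$, so that each $\pi_i$ is fully deterministic, and observing that the statement holds for \emph{every} $\pi$; hence $\valuefunctions = f_v(\policies)$ is contained in the convex hull of the value functions of deterministic policies.

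For the base case $k = 1$ I would apply Theorem~\ref{th:line} at the state $s_1$ to the policy $\pi$ itself. Since $\pi$ trivially agrees with itself everywhere, $\pi \in Y^\pi_{\states \setminus \{s_1\}}$, and the theorem supplies two $s_1$-deterministic policies $\pi_l, \pi_u$ in that set together with the identity $f_v(Y^\pi_{\states \setminus \{s_1\}}) = \{\alpha f_v(\pi_l) + (1-\alpha) f_v(\pi_u) \mid \alpha \in [0,1]\}$. In particular $V^\pi = f_v(\pi)$ is a convex combination of $f_v(\pi_l)$ and $f_v(\pi_u)$, both $s_1$-deterministic, which is exactly the claim for $k=1$.

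For the inductive step, suppose the claim holds for $s_1, \dots, s_k$, so that $V^\pi = \sum_i \lambda_i f_v(\pi_i)$ with each $\pi_i$ being $\{s_1, \dots, s_k\}$-deterministic. I would then apply Theorem~\ref{th:line} to each $\pi_i$ at the new state $s_{k+1}$: this expresses $f_v(\pi_i)$ as a convex combination of $f_v(\pi_{i,l})$ and $f_v(\pi_{i,u})$, where $\pi_{i,l}, \pi_{i,u}$ are $s_{k+1}$-deterministic and lie in $Y^{\pi_i}_{\states \setminus \{s_{k+1}\}}$. Substituting these expansions into $\sum_i \lambda_i f_v(\pi_i)$ and collecting coefficients rewrites $V^\pi$ as a convex combination of the value functions $f_v(\pi_{i,l}), f_v(\pi_{i,u})$, completing the induction; the number of terms at worst doubles at each of the $k$ steps.

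The one point demanding care — and the crux of the argument — is verifying that the determinism already accumulated at $s_1, \dots, s_k$ survives the pivot at $s_{k+1}$. This is precisely where the \emph{policy agreement} structure built into Theorem~\ref{th:line} is essential: the extreme policies $\pi_{i,l}, \pi_{i,u}$ it produces agree with $\pi_i$ on every state other than $s_{k+1}$, hence in particular on $s_1, \dots, s_k$. Since $\pi_i$ is $\{s_1, \dots, s_k\}$-deterministic by the inductive hypothesis, so are $\pi_{i,l}$ and $\pi_{i,u}$; combined with their $s_{k+1}$-determinism they are $\{s_1, \dots, s_{k+1}\}$-deterministic, as required. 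Once this observation is in place the remainder is pure bookkeeping, as a convex combination of convex combinations is again a convex combination.
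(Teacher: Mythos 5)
Your proof is correct and matches the paper's argument essentially step for step: the same induction on $k$, with Theorem~\ref{th:line} providing the base case and the inductive pivot, and the same key observation that the extreme policies $\pi_{i,l}, \pi_{i,u}$ agree with $\pi_i$ away from $s_{k+1}$ and therefore inherit its determinism on $s_1, \dots, s_k$. No gaps; the explicit attention you give to that agreement-preserves-determinism point is exactly the crux the paper relies on as well.
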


This result indicates a relationship between the vertices of $\valuefunctions$ and deterministic policies. Nevertheless, we observe in Figure \ref{fig:hull} that the value functions of deterministic policies are not necessarily the vertices of $\valuefunctions$ and that the vertices of $\valuefunctions$ are not necessarily attained by value functions of deterministic policies.

\begin{figure}[h!]
\includegraphics[width=0.45\textwidth]{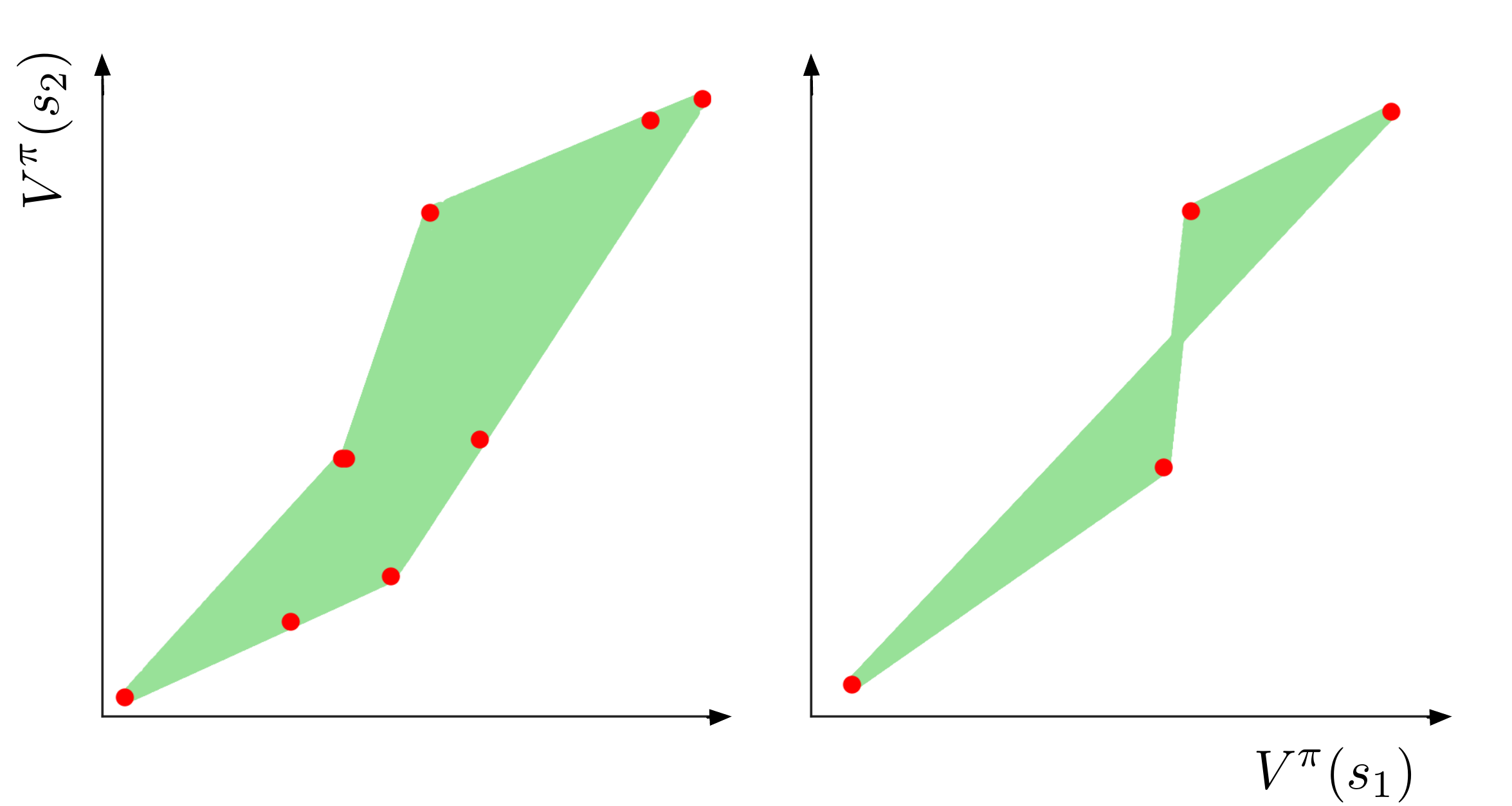}
\caption{Visual representation of Corollary \ref{corol:hull}. The space of value functions is included in the convex hull of value functions of deterministic policies (red dots).}
\label{fig:hull}
\end{figure}

The space of value functions is in general not convex. However, it does possess a weaker structural property regarding paths between value functions which is reminiscent of policy iteration-type results. 
\begin{restatable}{corollary}{nconnect}
Let $V^{\pi}$ and $V^{\pi'}$ be two value functions. Then there exists a sequence of $k \le |\states|$ policies, $\pi_1, \dots, \pi_k$, such that $V^\pi = V^{\pi_1}$, $V^{\pi'} = V^{\pi_k}$, and for every $i \in 1, \dots, k - 1$, the set
\begin{equation*}
\{ f_v( \alpha \pi_i + (1 - \alpha) \pi_{i+1}) \cbar \alpha \in [0, 1] \}
\end{equation*}
forms a line segment.
\label{th:nconnect}
\end{restatable}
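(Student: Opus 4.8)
The plan is to reduce the statement to a repeated application of the Line Theorem (Theorem~\ref{th:line}) by rewriting $\pi$ into $\pi'$ one state at a time. Enumerate the states as $\states = \{s_1, \dots, s_n\}$ with $n = |\states|$ and build a nested chain of policies: set $\pi_1 = \pi$, and for $1 \le i \le n$ let $\pi_{i+1}$ be the policy that uses $\pi'(\,\cdot \cbar s_j)$ for $j \le i$ and $\pi(\,\cdot \cbar s_j)$ for $j > i$. Every convex combination of policies is again a policy, so each $\pi_i$ is well defined. By construction the last policy agrees with $\pi'$ on every state, giving $f_v(\pi_{n+1}) = V^{\pi'}$, while $\pi_1 = \pi$ gives $f_v(\pi_1) = V^{\pi}$; this fixes the two endpoints required by the statement.

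The key observation is that consecutive policies in this chain differ on at most one state. Indeed, $\pi_i$ and $\pi_{i+1}$ coincide with $\pi'$ on $s_1, \dots, s_{i-1}$ and with $\pi$ on $s_{i+1}, \dots, s_n$, so they can disagree only on $s_i$; equivalently $\pi_{i+1} \in Y^{\pi_i}_{\states \setminus \{s_i\}}$, which is exactly the hypothesis of Theorem~\ref{th:line} with the free state taken to be $s_i$. Since every convex combination $\alpha \pi_i + (1-\alpha)\pi_{i+1}$ also lies in $Y^{\pi_i}_{\states \setminus \{s_i\}}$, the Line Theorem maps this agreement set onto a line segment, and the monotone-interpolation property of Lemma~\ref{lm:gproperties}(iv) identifies the image of the sub-path as the subsegment running between $f_v(\pi_i)$ and $f_v(\pi_{i+1})$. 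Hence $\{ f_v(\alpha \pi_i + (1-\alpha)\pi_{i+1}) \cbar \alpha \in [0,1]\}$ is a line segment for each $i$, which is precisely the property demanded of consecutive pairs.

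It remains to account for the number of policies. The construction produces a chain joined by at most $|\states|$ line segments, one per state that is rewritten. Segments along which the value does not change are degenerate single points — by Lemma~\ref{lm:gproperties}(iii), $f_v(\pi_i)=f_v(\pi_{i+1})$ forces the interpolating map to be constant — so such $\pi_{i+1}$ may be deleted and their neighbouring segments merged; in particular every state on which $\pi$ and $\pi'$ already agree is skipped. After pruning and relabelling one obtains a chain $\pi_1,\dots,\pi_k$ of the stated form whose length is governed by the number of states on which $\pi$ and $\pi'$ genuinely disagree, bounded by the number of states.

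The argument is a direct iteration of the Line Theorem, so I do not anticipate a deep obstacle. The two points needing care are verifying that each consecutive pair really agrees on all states but one — immediate from the nested definition — and the bookkeeping for $k$: the nested rewriting naively uses $|\states|+1$ policies (one extra endpoint relative to the $|\states|$ segments), and I would lean on the collapse of value-preserving segments via Lemma~\ref{lm:gproperties}(iii) to tighten the count to the chain length asserted in the statement.
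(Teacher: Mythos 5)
Your proposal is correct and follows essentially the same route as the paper's proof: the identical nested chain that rewrites $\pi$ into $\pi'$ one state at a time, with Theorem~\ref{th:line} applied to each consecutive pair of policies. The off-by-one you flag is real but lies in the statement itself --- when $\pi$ and $\pi'$ disagree on every state, the chain needs $|\states|+1$ policies and pruning value-preserving segments via Lemma~\ref{lm:gproperties}(iii) cannot help in the worst case; the paper's own proof glosses over exactly this (its construction even stops at $\pi_{|\states|-1}$, leaving the final consecutive pair differing on two states), so your version is, if anything, the more careful one.
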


\subsection{The Boundary of $\valuefunctions$}
We are almost ready to show that $\valuefunctions$ is a polytope. To do so, however, we need to show that the boundary of the space of value functions is described by semi-deterministic policies.

While at first glance reasonable given our earlier topological analysis, the result is complicated by the many-to-one mapping from policies to value functions, and requires additional tooling not provided by the line theorem. Recall from Lemma \ref{lm:freedom} the use of the affine vector space $\affinesev$ to constrain the value functions generated by fixing certain action probabilities.
\begin{restatable}{theorem}{thneighborhood}
Consider the ensemble of policies $\agree$ that agree with $\pi$ on states $\ssubspace = \{ s_1,..,s_k \}$. Suppose $\forall s \notin \ssubspace$, $\forall a \in \actions$, $\nexists \pi' \in \agree \cap D_{a,s}$ s.t. $f_v(\pi')=f_v(\pi)$, then $f_v(\pi)$ has a relative neighborhood in $\valuefunctions \cap \affinesev$.
\label{th:neighbor}
\end{restatable}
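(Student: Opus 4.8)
The plan is to construct an explicit smooth parameterization of a patch of policies in $\agree$ and show, via the inverse function theorem, that $f_v$ maps it diffeomorphically onto a relative neighborhood of $f_v(\pi)$ inside $\affinesev$. Writing $\ssubspace^c := \states \setminus \ssubspace$ for the $|\states|-k$ ``free'' states, I would build a map $\Phi$ from a neighborhood $U \subseteq \reals^{|\states|-k}$ of the origin (coordinates indexed by $\ssubspace^c$) into $\agree$ that perturbs the action distribution of $\pi$ at each free state while leaving $\pi$ unchanged on $\ssubspace$, with $\Phi(0)=\pi$. Since $f_v$ is smooth and $\Phi$ is affine, $f_v\circ\Phi$ is smooth, and by Lemma \ref{lm:freedom} its image lies in $f_v(\agree) = \valuefunctions \cap \affinesev$. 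It therefore suffices to show $f_v\circ\Phi$ is a local diffeomorphism at the origin, viewed as a map into the $(|\states|-k)$-dimensional affine space $\affinesev$; the inverse function theorem then produces an open (relative to $\affinesev$) neighborhood of $f_v(\pi)$ in the image, hence in $\valuefunctions \cap \affinesev$.

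First I would compute the differential of $f_v$ under a perturbation localized at a single free state $s$. Differentiating the Bellman equation $V^\pi = \rpi + \gamma \Ppi V^\pi$ gives
\begin{equation*}
dV^\pi = (\eye - \gamma \Ppi)^{-1}\big( d\rpi + \gamma\,(d\Ppi)\,V^\pi \big).
\end{equation*}
By Lemma \ref{lm:zeros}, perturbing the policy only at state $s$ forces $d\rpi$ and the rows of $d\Ppi$ to vanish outside component $s$, so the bracketed vector equals $c_s e_s$ for a scalar $c_s$, with $e_s$ the $s$-th basis vector. Hence the directional derivative is $c_s\,C_s^\pi$, where $C_s^\pi$ is the $s$-th column of $(\eye-\gamma\Ppi)^{-1}$, and a short calculation identifies $c_s = \sum_a \delta_s(a)\,Q^\pi(s,a)$, where $Q^\pi(s,a) = r(s,a) + \gamma\sum_{s'} P(s'\cbar s,a) V^\pi(s')$ and $\delta_s$ is the chosen perturbation of $\pi(\cdot\cbar s)$.

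The crux is to choose, at each free state $s$, a perturbation $\delta_s$ that keeps $\pi(\cdot\cbar s)$ in the probability simplex for small displacements of either sign and yields $c_s\neq 0$. This is where the hypothesis enters. Suppose instead that every action $a$ with $\pi(a\cbar s)>0$ had the same value $Q^\pi(s,a)=q$. Then the $s$-deterministic policy $\pi_{a_0}$ placing all mass on one such $a_0$ (and agreeing with $\pi$ elsewhere, hence on $\ssubspace$) would satisfy the Bellman equation for $V^\pi$: its backup at $s$ is $Q^\pi(s,a_0)=q=V^\pi(s)$, and at every other state it coincides with that of $\pi$. By uniqueness of the Bellman fixed point, $f_v(\pi_{a_0})=f_v(\pi)$ with $\pi_{a_0}\in\agree\cap D_{a_0,s}$, contradicting the hypothesis. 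Therefore there exist two actions $a,a'$ with positive probability and $Q^\pi(s,a)\neq Q^\pi(s,a')$; taking $\delta_s = e_{a'}-e_a$ keeps $\pi(\cdot\cbar s)$ in the simplex for small signed displacements and gives $c_s = Q^\pi(s,a') - Q^\pi(s,a)\neq 0$.

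With these choices the Jacobian of $f_v\circ\Phi$ at the origin has columns $c_s\,C_s^\pi$ for $s\in\ssubspace^c$. Since $(\eye-\gamma\Ppi)^{-1}$ is invertible its columns are linearly independent, and the $|\states|-k$ vectors $\{C_s^\pi : s\in\ssubspace^c\}$ span the linear part of $\affinesev$ by definition; scaling by the nonzero $c_s$ preserves independence, so the Jacobian is a linear isomorphism onto the tangent space of $\affinesev$. The inverse function theorem then yields the desired relative neighborhood. The main obstacle is precisely the crux step: ruling out the degenerate case in which the localized perturbation produces no first-order motion, which is exactly where the hypothesis and the uniqueness of the Bellman fixed point must be combined. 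The remainder reduces to differentiating the Bellman equation and a dimension count.
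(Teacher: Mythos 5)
Your proposal is correct, and it reaches the paper's conclusion by a partly different route. Your differential computation $dV = (\eye-\gamma\Ppi)^{-1}\big(d\rpi + \gamma\,(d\Ppi)\,V^\pi\big)$, the identification of the directional derivative as $c_s\,C^\pi_s$ with $c_s=\sum_a \delta_s(a)Q^\pi(s,a)$, the use of Bellman fixed-point uniqueness to extract two positive-probability actions with distinct $Q$-values from the hypothesis, and the final inverse-function-theorem step are all sound; note that your contradiction argument also covers the degenerate case where $\pi$ is itself $s$-deterministic at a free state (there $\pi_{a_0}=\pi$ violates the hypothesis directly). The paper shares your endgame --- a one-parameter-per-free-state map whose Jacobian columns are nonzero multiples of the columns of $(\eye-\gamma\Ppi)^{-1}$, hence independent, followed by the inverse function theorem --- but arrives there differently: it first applies Theorem \ref{th:line} recursively over the free states to replace $\pi$ by a value-equivalent policy $\hat\pi\in\agree$ supported on exactly two actions per free state with strictly interior mixture weights (the hypothesis is what forces each weight $\hat\mu_i$ into $(0,1)$, since an endpoint would be an $s$-deterministic policy in $\agree$ with value $f_v(\pi)$), and then invokes Lemma \ref{lm:gproperties}(iv) to assert that the partial derivatives at $\hat\mu$ are nonzero. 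Your route is more elementary and self-contained: it avoids the line theorem and Lemma \ref{lm:gproperties} entirely, works at $\pi$ itself with arbitrary action supports, and produces an explicit formula $c_s = Q^\pi(s,a')-Q^\pi(s,a)$ for the derivative, whereas the paper's appeal to strict monotonicity of the rational function $\rho$ leaves the nonvanishing of the derivative somewhat implicit (strict monotonicity of a differentiable function does not by itself exclude isolated critical points, so your explicit computation is actually tighter at that step). What the paper's approach buys in exchange is reuse of machinery already established for its other results and a parameterization that stays within the two-action mixture framework used throughout its proofs.
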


Theorem \ref{th:neighbor} demonstrates by contraposition that the boundary of the space of value functions is a subset of the ensemble of value functions of semi-deterministic policies. Figure \ref{fig:boundary} shows that the latter can be a proper subset.

\begin{restatable}{corollary}{corolboundary}
Consider a policy $\pi \in \policies$,  the states $\ssubspace = \{s_1, .., s_k\}$, and the ensemble  $\agree$ of policies that agree with $\pi$ on $s_1,..,s_k$. Define $\valuefunctions^{y} = f_v(\agree)$, we have that the relative boundary of $\valuefunctions^{y}$ in $\affinesev$ is included in the value functions spanned by policies in $\agree$ that are $s$-deterministic for $s \notin \ssubspace$:
\begin{equation*}
    \partial \valuefunctions^y \subset \bigcup_{s \notin \ssubspace} \bigcup_{a \in \actions} f_v(\agree \cap D_{s, a}),
\end{equation*}
where $\partial$ refers to $\partial_{\affinesev}$.
\label{cr:boundary}
\end{restatable}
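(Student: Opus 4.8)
The plan is to read this corollary as the contrapositive of Theorem~\ref{th:neighbor}, so that the analytical content is already supplied; what remains is bookkeeping to line up the ambient affine space and to guarantee that every boundary point is the image of a policy in $\agree$. First I would fix the ambient space $K := \affinesev$ and invoke Lemma~\ref{lm:freedom} to identify the sub-polytope with a slice of the full polytope, $\valuefunctions^y = f_v(\agree) = \valuefunctions \cap \affinesev$. Under this identification the phrase ``$f_v(\pi)$ has a relative neighbourhood in $\valuefunctions \cap \affinesev$'' is exactly ``$f_v(\pi) \in \text{relint}_{\affinesev}(\valuefunctions^y)$'', so the conclusion of Theorem~\ref{th:neighbor} says precisely that $f_v(\pi)$ lies in the relative interior of $\valuefunctions^y$.

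Next I would take an arbitrary $v \in \partial_{\affinesev}\valuefunctions^y$. Since the relative boundary is by definition contained in the set itself, $v \in \valuefunctions^y = f_v(\agree)$, so there is a policy $\tilde\pi \in \agree$ with $f_v(\tilde\pi) = v$; in particular $v$ is attained by a policy that already agrees with $\pi$ on $\ssubspace$. Because $v$ is a boundary point it is not in $\text{relint}_{\affinesev}(\valuefunctions^y)$, i.e.\ $f_v(\tilde\pi)$ has no relative neighbourhood in $\valuefunctions \cap \affinesev$. I would then apply Theorem~\ref{th:neighbor} with base policy $\tilde\pi$ in contrapositive form: the failure of the neighbourhood conclusion forces the existence of some $s \notin \ssubspace$, some $a \in \actions$, and some $\pi' \in \agree \cap \deter$ with $f_v(\pi') = f_v(\tilde\pi) = v$. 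Hence $v \in f_v(\agree \cap \deter) \subseteq \bigcup_{s \notin \ssubspace}\bigcup_{a \in \actions} f_v(\agree \cap D_{s,a})$, and since $v$ was arbitrary the claimed inclusion follows.

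The one genuine subtlety --- and what I expect to be the main obstacle --- is that Theorem~\ref{th:neighbor} is phrased relative to its base policy, including the affine space $\affinesev = H^\pi_{\ssubspace}$ built from the columns of $(I - \gamma \Ppi)^{-1}$; yet the boundary point $v$ is attained by a possibly different policy $\tilde\pi$, for which the theorem naturally speaks of $H^{\tilde\pi}_{\ssubspace}$. To close this gap I would verify that the affine space is the same for every policy in $\agree$, i.e.\ $H^{\tilde\pi}_{\ssubspace} = \affinesev$ whenever $\tilde\pi \in \agree$. The clean way to see this is to rewrite membership: since $(I - \gamma \Ppi)\col_j = e_j$, one has $w \in Span(C^\pi_{k+1}, \dots, C^\pi_{|\states|})$ iff $(I - \gamma \Ppi)w$ is supported off $\ssubspace$, whence, using $(I - \gamma \Ppi)V^\pi = \rpi$,
\begin{equation*}
    v \in \affinesev \iff \big[(I - \gamma \Ppi)\,v\big]_s = (\rpi)_s \quad \text{for all } s \in \ssubspace .
\end{equation*}
By Lemma~\ref{lm:zeros}, agreement of $\tilde\pi$ and $\pi$ on $\ssubspace$ means that $\Ppi$ and $P^{\tilde\pi}$ share the rows indexed by $\ssubspace$ and that $(\rpi)_s = (r_{\tilde\pi})_s$ there; the displayed condition is therefore identical for $\pi$ and $\tilde\pi$, giving $H^{\tilde\pi}_{\ssubspace} = \affinesev$. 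Likewise $Y^{\tilde\pi}_{\ssubspace} = \agree$ and $\agree \cap D_{s,a}$ is unchanged under re-basing, so the contrapositive applied to $\tilde\pi$ refers to exactly the same slice $\valuefunctions^y$, the same ambient space, and the same union of determinism sets. Once this invariance is established the argument above carries through directly.
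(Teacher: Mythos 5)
Your proposal is correct and follows essentially the same route as the paper: the corollary is obtained as the contrapositive of Theorem~\ref{th:neighbor}, applied at an arbitrary point of $\partial \valuefunctions^y$, which (since $\partial \valuefunctions^y \subset \valuefunctions^y$) is the value function of some $\tilde\pi \in \agree$. The re-basing subtlety you flag --- that $H^{\tilde\pi}_{s_1,..,s_k} = \affinesev$ and $Y^{\tilde\pi}_{s_1,..,s_k} = \agree$ for every $\tilde\pi \in \agree$ --- is genuine but is already covered by the paper's machinery (Lemma~\ref{lm:matching} combined with Lemma~\ref{lm:freedom} gives exactly this invariance), so your explicit verification tightens a step the paper's proof leaves implicit rather than constituting a different argument.
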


\begin{figure}[h!]
\includegraphics[width=0.45\textwidth]{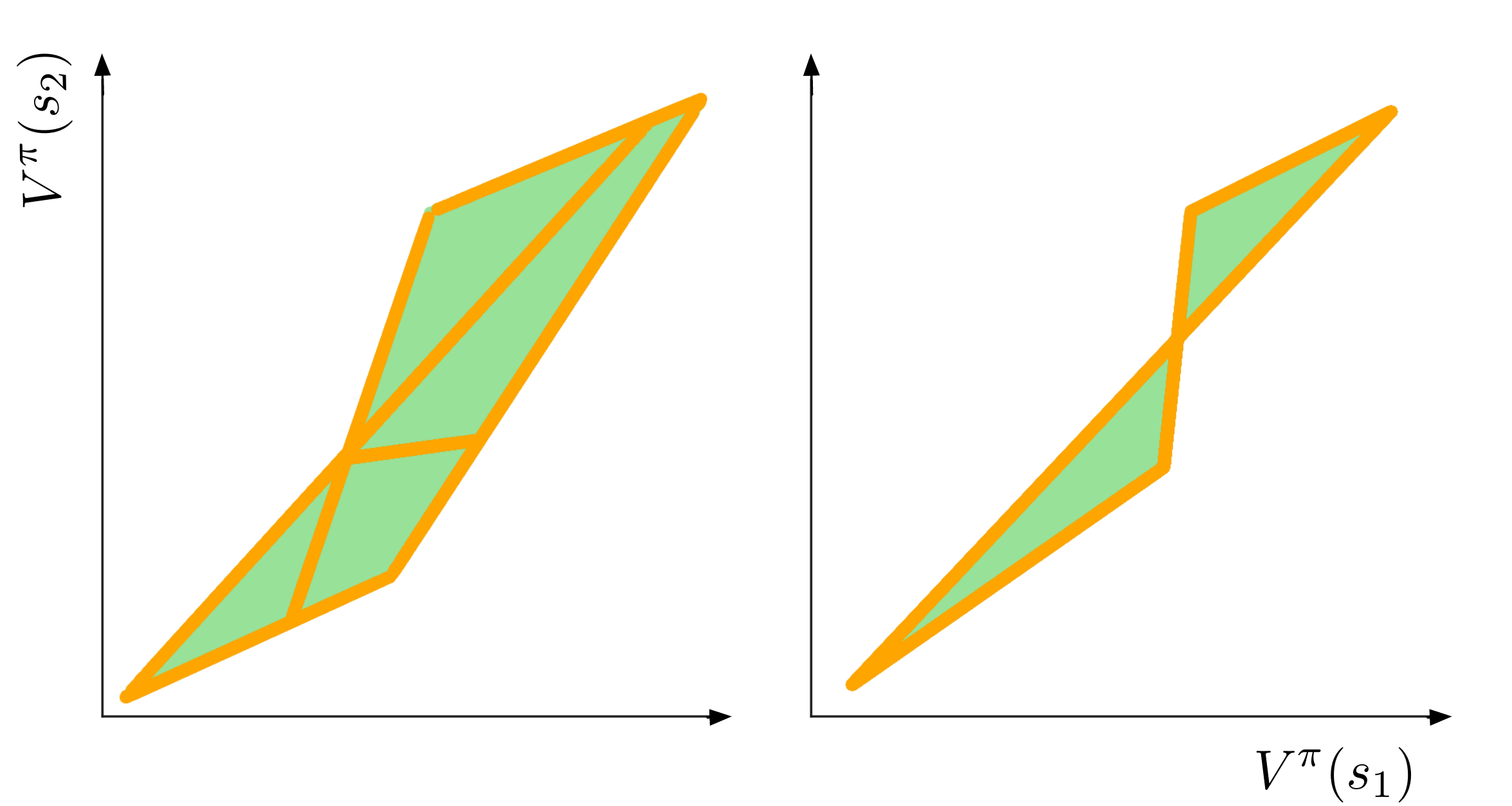}
\caption{Visual representation of Corollary \ref{cr:boundary}. The orange points are the value functions of semi-deterministic policies.}
\label{fig:boundary}
\end{figure}

\subsection{The Polytope of Value Functions}

We are now in a position to combine the results of the previous section to arrive at our main contribution: $\valuefunctions$ is a polytope in the sense of Def. \ref{def:general_polytope} and Prop. \ref{prop:tope}. Our result is in fact stronger: we show that any subset of policies $\agree$ generates a sub-polytope of $\valuefunctions$.
\begin{restatable}{theorem}{thboundaries}
Consider a policy $\pi \in \policies$,  the states $s_1, .., s_k \in \states$, and the ensemble $\agree$ of policies that agree with $\pi$ on $s_1,..,s_k$. Then $f_v(\agree)$ is a polytope and in particular, $\valuefunctions = f_v(Y^\pi_{\emptyset})$ is a polytope.
\label{th:polytope}
\end{restatable}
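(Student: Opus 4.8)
The plan is to prove the statement by strong induction on the dimension $d = |\states| - k$ of the affine subspace $\affinesev$, at each step applying the recursive polyhedron criterion of Proposition \ref{prop:tope} with $K = \affinesev$ and $P = f_v(\agree)$. By Lemma \ref{lm:freedom} we have $f_v(\agree) = \valuefunctions \cap \affinesev$, so $P$ is exactly the object Proposition \ref{prop:tope} concerns. Once $P$ is shown to be a polyhedron in $\affinesev$, boundedness (it is a subset of the compact set $\valuefunctions$ from Lemma \ref{lm:compact}) forces each of its finitely many convex polyhedral pieces to be a convex polytope, so $P$ is a polytope in the sense of Definition \ref{def:general_polytope}. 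Specializing to $k = 0$, where $\agree = \policies$ and $\affinesev = \reals^{\states}$, then gives that $\valuefunctions$ itself is a polytope. The base case $d = 0$ is immediate: there $\agree = \{\pi\}$ and $f_v(\agree) = \{V^\pi\}$ is a single point.

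For the inductive step I verify the three hypotheses of Proposition \ref{prop:tope}. Condition (i), closedness, follows because $\agree$ is compact and $f_v$ is continuous, so $P$ is compact. For condition (ii), Corollary \ref{cr:boundary} already gives $\partial_{\affinesev} P \subseteq \bigcup_{s \notin \{s_1,..,s_k\}} \bigcup_{a \in \actions} f_v(\agree \cap \deter)$. The key observation is that each piece is itself a lower-dimensional agreement image: taking the representative policy $\pi'$ that equals $\pi$ everywhere except that it plays $a$ deterministically at $s$, we have $\agree \cap \deter = Y^{\pi'}_{s_1,..,s_k,s}$, and Lemma \ref{lm:freedom} yields $f_v(\agree \cap \deter) = \valuefunctions \cap H^{\pi'}_{s_1,..,s_k,s}$, which lies in an affine space of dimension $d-1$ and is therefore a polytope by the inductive hypothesis.

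The main obstacle is condition (iii): for the covering hyperplanes I must show $P \cap H_i$ is a polyhedron in $H_i$, and a priori $P \cap H_i = \valuefunctions \cap H_i$ could contain value functions of policies that do not fix $s$, so it need not coincide with $f_v(\agree \cap \deter)$. I expect to resolve this using the single-state structure from Lemma \ref{lm:zeros}: since $\pi'$ and $\pi$ differ only in row $s$, the matrices $(I - \gamma \Ppi)^{-1}$ and $(I - \gamma P^{\pi'})^{-1}$ are related by a rank-one Sherman--Morrison update, so for each state $j$ the $j$-th column of $(I - \gamma P^{\pi'})^{-1}$ is a linear combination of the $j$-th and the $s$-th columns of $(I - \gamma \Ppi)^{-1}$. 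As both of these columns span directions of $\affinesev$ (for $j$ a free state distinct from $s$, and for $s$ itself), the direction space of $H^{\pi'}_{s_1,..,s_k,s}$ is contained in that of $\affinesev$; hence $G := \affinesev \cap H^{\pi'}_{s_1,..,s_k,s}$ is a genuine hyperplane of $\affinesev$, of dimension exactly $d-1$, rather than a lower-dimensional flat.

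I then take these $G$'s as the covering hyperplanes. Because $Y^{\pi'}_{s_1,..,s_k,s} \subseteq \agree$, the set $f_v(\agree \cap \deter) = \valuefunctions \cap H^{\pi'}_{s_1,..,s_k,s}$ already lies in $\affinesev$, so it equals $\valuefunctions \cap G = P \cap G$; thus $P \cap G$ is precisely the $(d-1)$-dimensional polytope furnished by the inductive hypothesis, discharging condition (iii), while the finite union of the $G$'s still covers $\partial_{\affinesev} P$ for condition (ii). Proposition \ref{prop:tope} then shows $P$ is a polyhedron, and boundedness makes it a polytope, closing the induction. The crux is thus the Sherman--Morrison computation establishing that restricting one additional state slices $\affinesev$ along an honest hyperplane, which is exactly what allows the inductively-constructed lower-dimensional polytopes to serve as the ``flat'' boundary pieces demanded by Proposition \ref{prop:tope}.
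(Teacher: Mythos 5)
Your proof is correct and follows essentially the same route as the paper's: induction on the number of constrained states, verifying the three conditions of Proposition \ref{prop:tope} via closedness, the boundary covering from Corollary \ref{cr:boundary}, the identification $f_v(\agree \cap \deter) = \valuefunctions \cap H^{\pi'}_{s_1,..,s_k,s}$ from Lemma \ref{lm:freedom}, and finally boundedness of $\valuefunctions$ to upgrade polyhedron to polytope. The one step you elaborate with Sherman--Morrison --- that the slice $H^{\pi'}_{s_1,..,s_k,s}$ sits inside $\affinesev$ as an honest hyperplane, so that $P \cap G$ coincides with the inductively-given polytope --- is precisely what the paper delegates to its appendix Lemma \ref{lm:matching} (equality of the spanned column spaces for agreeing policies); you have simply made explicit a detail the paper's proof leaves terse.
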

 Despite the evidence gathered in the previous section in favour of the above theorem, the result is surprising given the fundamental non-linearity of the functional $f_v$: again, mixtures of policies can describe curves (Figure \ref{fig:nonlinear}), and even the mapping $g$ in Lemma \ref{lm:gproperties} is nonlinear in $\mu$.

That the polytope can be non-convex is obvious from the preceding figures. As Figure \ref{fig:boundary} (right) shows, this can happen when value functions along two different line segments cross. At that intersection, something interesting occurs: there are two policies with the same value function but that do not agree on either state. We will illustrate the effect of this structure on learning dynamics in Section \ref{sec:dynamics}.

Finally, there is a natural sub-polytope structure in the space of value functions. If policies are free to vary only on a subset of states of cardinal $k$, then there is a polytope of dimension $k$ associated with the induced space of value functions. This makes sense since constraining policies on a subset of states is equivalent to defining a new MDP, where the transitions associated with the complement of this subset of states are not dependent on policy decisions. 

\section{Related Work}

The link between geometry and reinforcement learning has been so far fairly limited. However we note the former use of convex polyhedra in the following:   

\textbf{Simplex Method and Policy Iteration}. The policy iteration algorithm \cite{howard60dynamic} closely relates to the simplex algorithm \cite{dantzig1948programming}. In fact, when the number of states where the policy can be updated is at most one, it is exactly the simplex method, sometimes referred to as \textit{simple} policy iteration. As opposed to the limitations of the simplex algorithm \cite{littman1995complexity}, namely the worst case convergence in exponential time, it was demonstrated that the simplex algorithm applied to MDPs with an adequate pivot rule converges in polynomial time \cite{ye2011simplex}.

\textbf{Linear Programming}. Finding the optimal value function of an MDP can be formulated as a linear program \cite{puterman94markov, bertsekas96neurodynamic, de2003linear, wang2007dual}. In the primal form, the feasible constraints are defined by $\{ V \in \reals^{|\states|} \; \big| \; V \gvec \bellop^*  V \}$, where $\bellop^*$ is the optimality Bellman operator. Notice that there is a unique value function $V \in \valuefunctions$ that is feasible, which is exactly the optimal value function $V^*$. 

The dual formulation consists of maximizing the expected return for a given initial state distribution, as a function of the discounted state action visit frequency distribution. Contrary to the primal form, any feasible discounted state action visit frequency distribution maps to an \textit{actual} policy  \cite{wang2007dual}. 

\section{Dynamics in the Polytope \label{sec:dynamics}}

In this section we study how the behaviour of common reinforcement learning algorithms is reflected in the value function polytope. We consider two value-based methods, value iteration and policy iteration, three variants of the policy gradient method, and an evolutionary strategy.

Our experiments use the two-state, two-action MDP depicted elsewhere in this paper (details in Appendix \ref{sec:mdps}). Value-based methods are parametrized directly in terms of the value vector in $\reals^2$; policy-based methods are parametrized using the softmax distribution, with one parameter per state. We initialize all methods at the same starting value functions (indicated on Figure \ref{fig:vi}): \textit{near} a vertex ($V^i_1$), \textit{near} a boundary ($V^i_2$), and in the interior of the polytope ($V^i_3$).\footnote{The use of the softmax precludes initializing policy-based methods exactly at boundaries.}

We are chiefly interested in three aspects of the different algorithms' learning dynamics: 1) the \emph{path} taken through the value polytope, 2) the \emph{speed} at which they traverse the polytope, and 3) any \emph{accumulation} points that occur along this path. As such, we compute model-based versions of all relevant updates; in the case of evolutionary strategies, we use large population sizes \cite{deboer04tutorial}.

\subsection{Value Iteration}
Value iteration \cite{Bellman:DynamicProgramming} consists of the repeated application of the optimality Bellman operator $\bellop^*$
$$V_{k+1} := \bellop^* V_{k}.$$
In all cases, $V_0$ is initialized to the relevant starting value function. 
\begin{figure}[h!]
\includegraphics[width=0.475\textwidth]{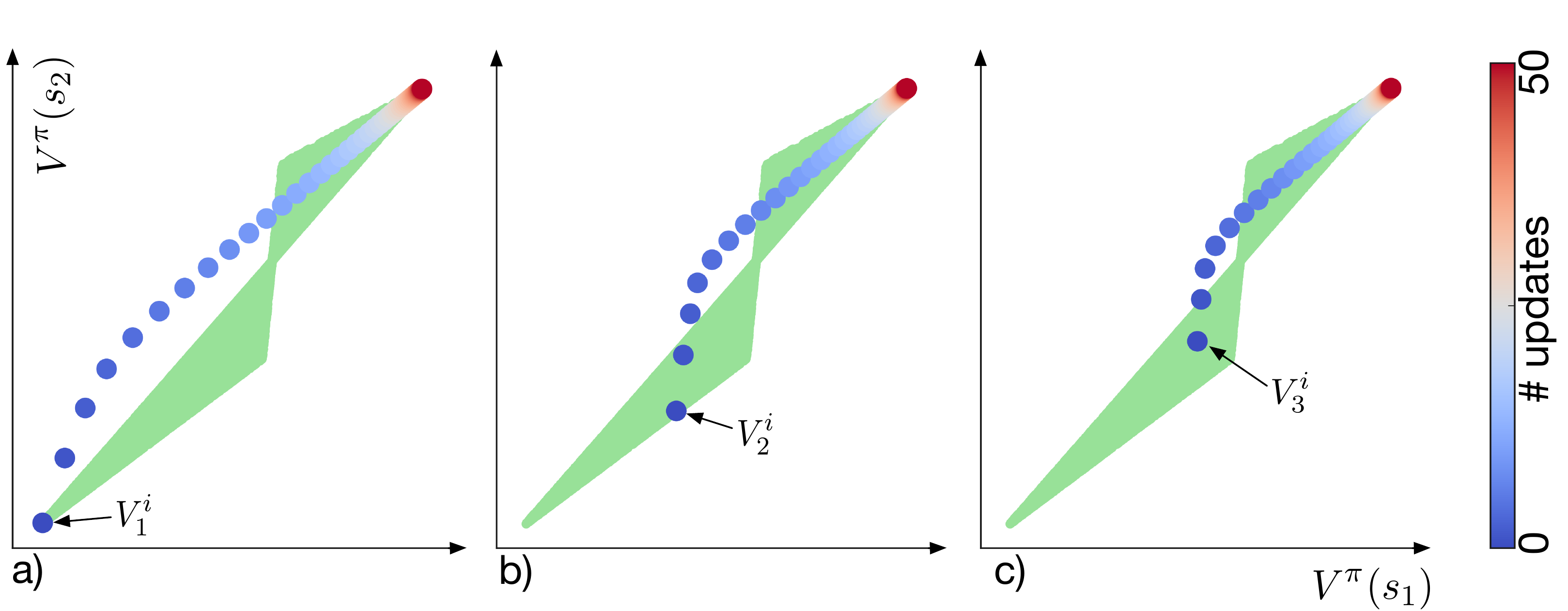}
\caption{Value iteration dynamics for three initialization points.}
\label{fig:vi}
\end{figure}
Figure \ref{fig:vi} depicts the paths in value space taken by value iteration, from the starting point to the optimal value function. We observe that the path does not remain within the polytope: value iteration generates a sequence of vectors that may not map to any policy. Our visualization also highlights results by \citep{bertsekas94generic} showing that value iteration spends most of its time along the constant (1, 1) vector, and that the ``real'' convergence rate is in terms of the second largest eigenvalue of $P$.

\subsection{Policy Iteration}
Policy iteration \cite{howard60dynamic} consists of the repeated application of a policy improvement step and a policy evaluation step until convergence to the optimal policy. The policy improvement step updates the policy by acting \textit{greedily} according to the current value function; the value function of the new policy is then evaluated. The algorithm is based on the following update rule
\begin{align*}
&\pi_{k+1} := \text{greedy}(V_{k})\\
&V_{k+1} := \text{evaluate}(\pi_{k+1}),
\end{align*}
with $V_0$ initialized as in value iteration.
\begin{figure}[h!]
\includegraphics[width=0.5\textwidth]{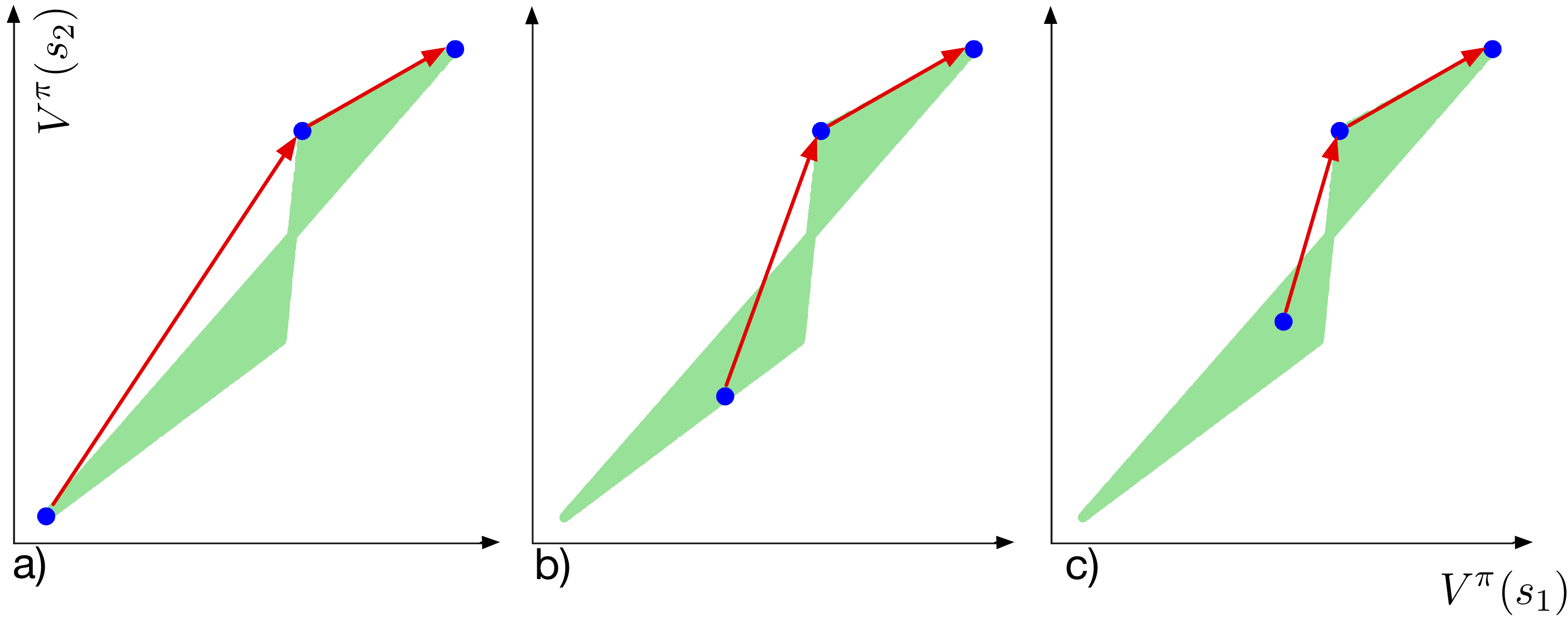}
\caption{Policy iteration. The red arrows show the sequence of value functions (blue) generated by the algorithm.}
\label{fig:pi}
\end{figure}

The sequence of value functions visited by policy iteration (Figure \ref{fig:pi}) corresponds to value functions of  deterministic policies, which in this specific MDP corresponds to vertices of the polytope.

\subsection{Policy Gradient \label{sec:pg}}

Policy gradient is a popular approach for directly optimizing the value function via parametrized policies \cite{williams1992simple,konda2000actor,sutton2000policy}. For a policy $\pi_\theta$ with parameters $\theta$ the policy gradient is
\begin{equation*}
\nabla_\theta J(\theta) = \mathbb{E}_{s \sim d_\pi, a \sim \pi(\cdot \cbar s)} \nabla_\theta \log \pi(a \cbar s) [r(s,a) + \gamma \E V(s')]
\end{equation*}
where $d_\pi$ is the discounted stationary distribution; here we assume a uniformly random initial distribution over the states. The policy gradient update is then ($\eta \in [0, 1])$
\begin{align*}
&\theta_{k+1} := \theta_{k} + \eta \nabla_\theta J(\theta_k) .
\end{align*}

\begin{figure}[h!]
\includegraphics[width=0.5\textwidth]{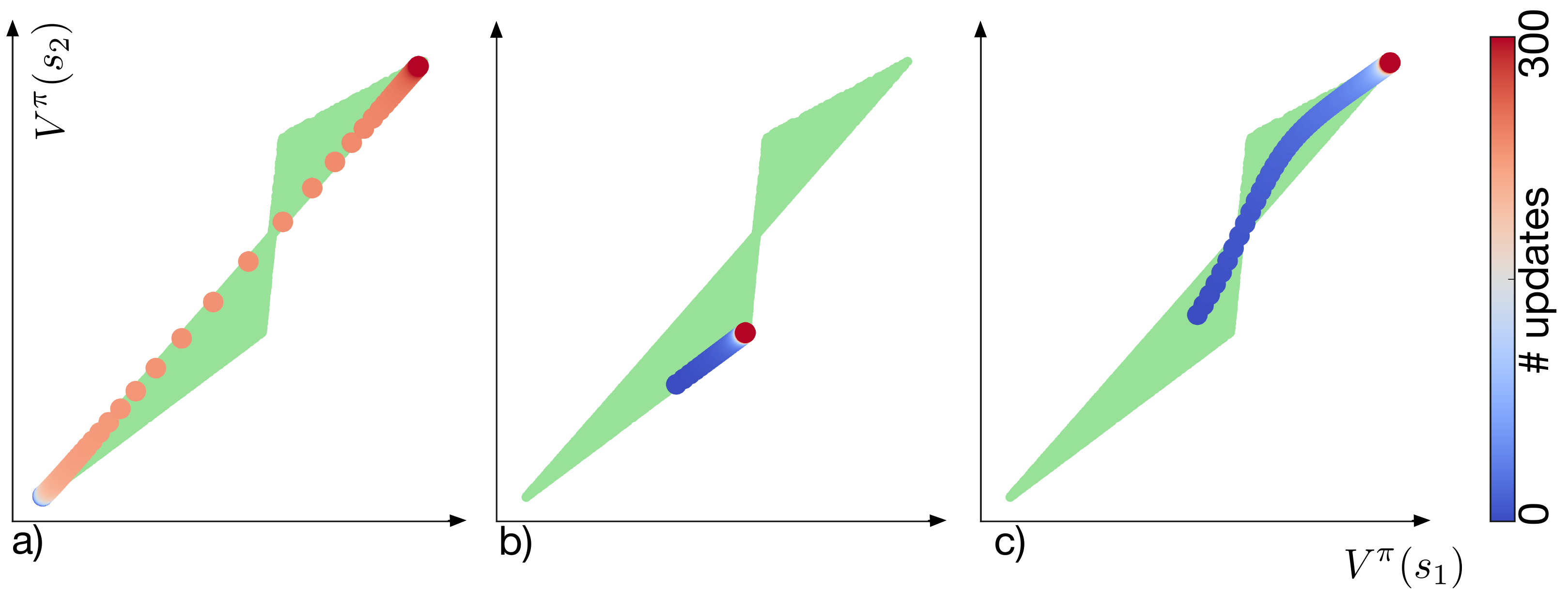}
\caption{Value functions generated by policy gradient.}
\label{fig:pg}
\end{figure}

Figure \ref{fig:pg} shows that the convergence rate of policy gradient strongly depends on the initial condition. In particular, Figure \ref{fig:pg}a),b) show accumulation points along the update path (not shown here, the method does eventually converge to $V^*$).
This behaviour is sensible given the dependence of $\nabla_\theta J(\theta)$ on $\pi(\cdot \cbar s)$, with gradients vanishing at the boundary of the polytope. 

\subsection{Entropy Regularized Policy Gradient}
Entropy regularization adds an entropy term to the objective \cite{williams1991function}. The new policy gradient becomes
\begin{equation*}
    \nabla_\theta J_\text{ent}(\theta) = \nabla_\theta J(\theta) - \nabla_\theta \E_{s \sim d_{\pi}} H(\pi(\cdot \cbar s)),
\end{equation*}
where $H(\cdot)$ denotes the Shannon entropy.
\begin{figure}[h!]
\includegraphics[width=0.5\textwidth]{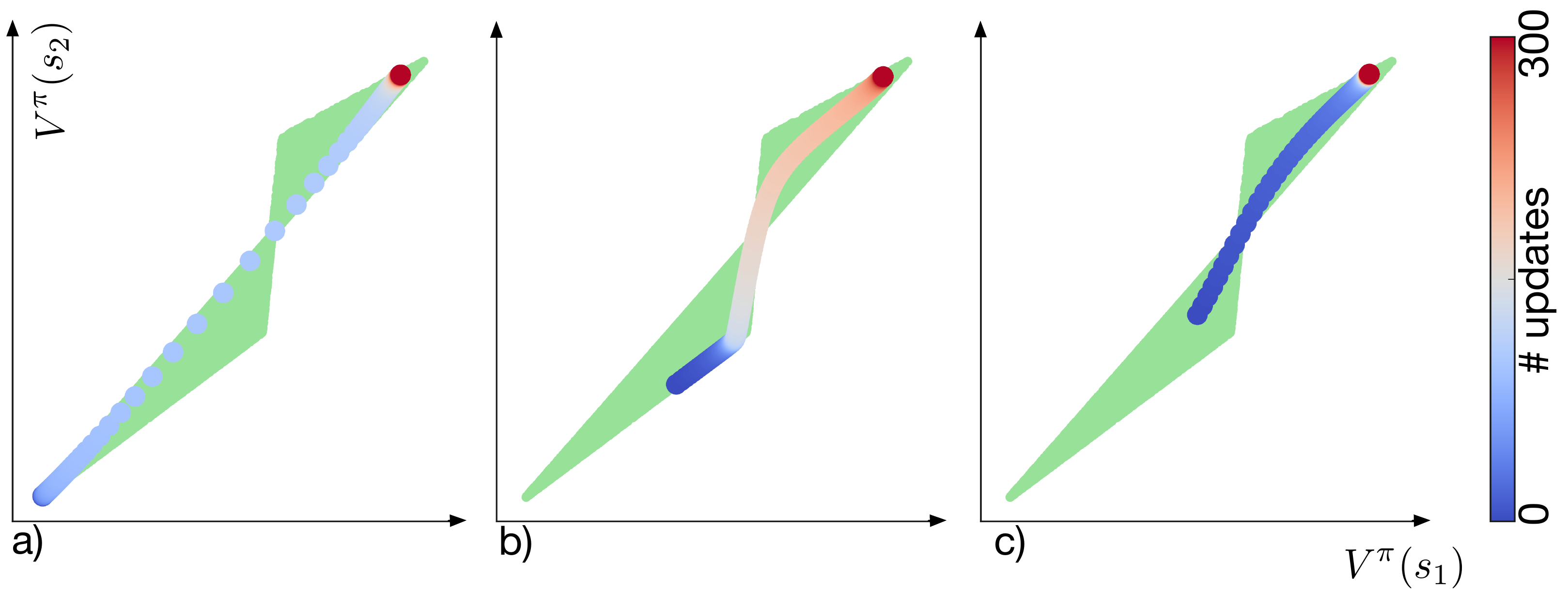}
\caption{Value functions generated by policy gradient with entropy, for three different initialization points.}
\label{fig:pgreg}
\end{figure}
The entropy term encourages policies to move away from the boundary of the polytope. Consequent with our previous observation regarding policy gradient, we find that this improves the convergence rate of the optimization procedure (Figure \ref{fig:pgreg}). One trade-off is that the policy converges to a sub-optimal policy, which is not deterministic. 

\subsection{Natural Policy Gradient}

Natural policy gradient \cite{kakade2002natural} is a second-order policy optimization method. The gradient updates condition the standard policy gradient with the inverse Fisher information matrix $F$ \citep{kakade2002natural}, leading to the following update rule:
\begin{align*}
&\theta_{k+1} := \theta_{k} + \eta F^{-1} \nabla_\theta J(\theta_k).
\end{align*}
This causes the gradient steps to follow the steepest ascent direction in the underlying structure of the parameter space. 

\begin{figure}[h!]
\includegraphics[width=0.475\textwidth]{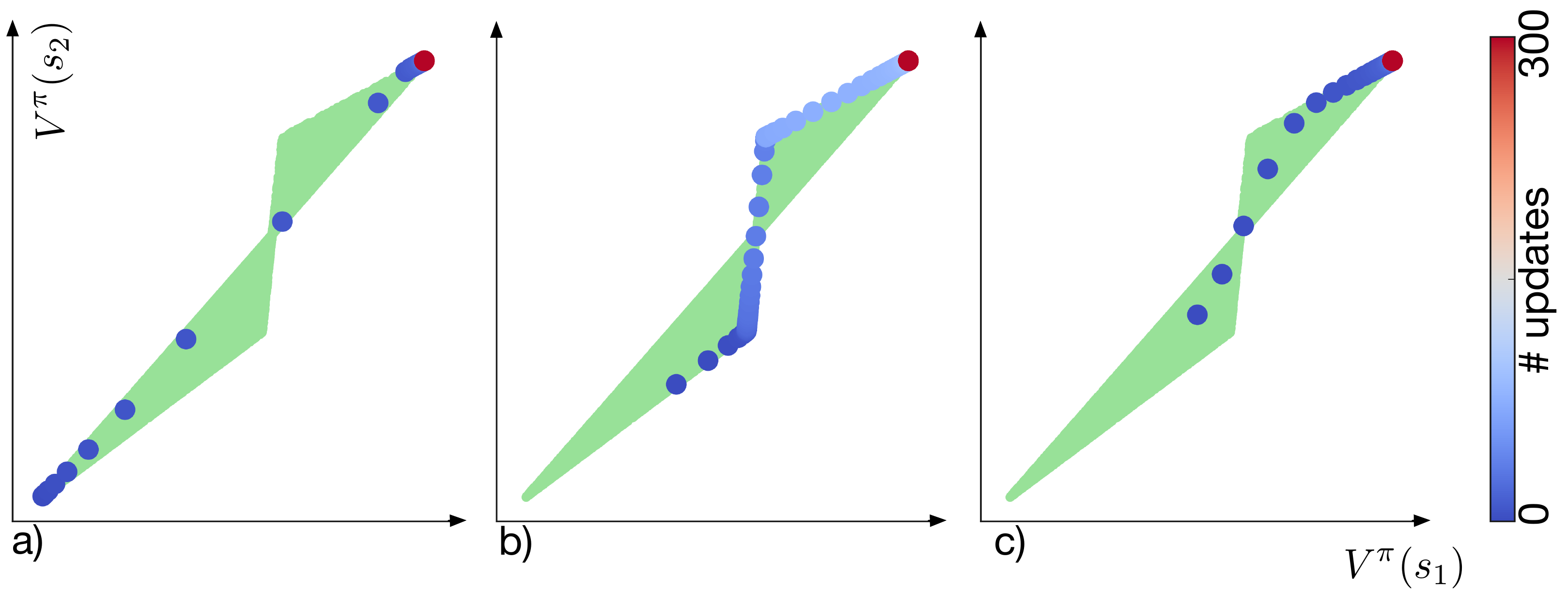}
\caption{Natural policy gradient.}
\label{fig:npg}
\end{figure}
In our experiment, we observe that natural policy gradient is less prone to accumulation than policy gradient (Fig. \ref{fig:npg}), in part because the step-size is better conditioned. Figure b) shows unregularized policy gradient does not, surprisingly enough, take the ``shortest path'' through the polytope to the optimal value function: instead, it moves from one vertex to the next, similar to policy iteration.

\subsection{Cross-Entropy Method}

Gradient-free optimization methods have shown impressive performance over complex control tasks \cite{deboer04tutorial, salimans2017evolution}. We present the dynamics of the cross-entropy method (CEM), without noise and with a constant noise factor (CEM-CN) \cite{szita06learning}. The mechanics of the algorithm is threefold: (i) sample a population of size $N$ of policy parameters from a Gaussian distribution of mean $\theta$, covariance $C$; (ii) evaluate the returns of the population; (iii) select top $K$ members, and fit a new Gaussian onto them. In the CEM-CN variant, we inject additional isotropic noise at each iteration. We use $N=500$, $K=50$, an initial covariance of $0.1\eye$, where $I$ is the identity matrix of size 2, and a constant noise of $0.05\eye$. 

\begin{figure}[h!]
\includegraphics[width=0.475\textwidth]{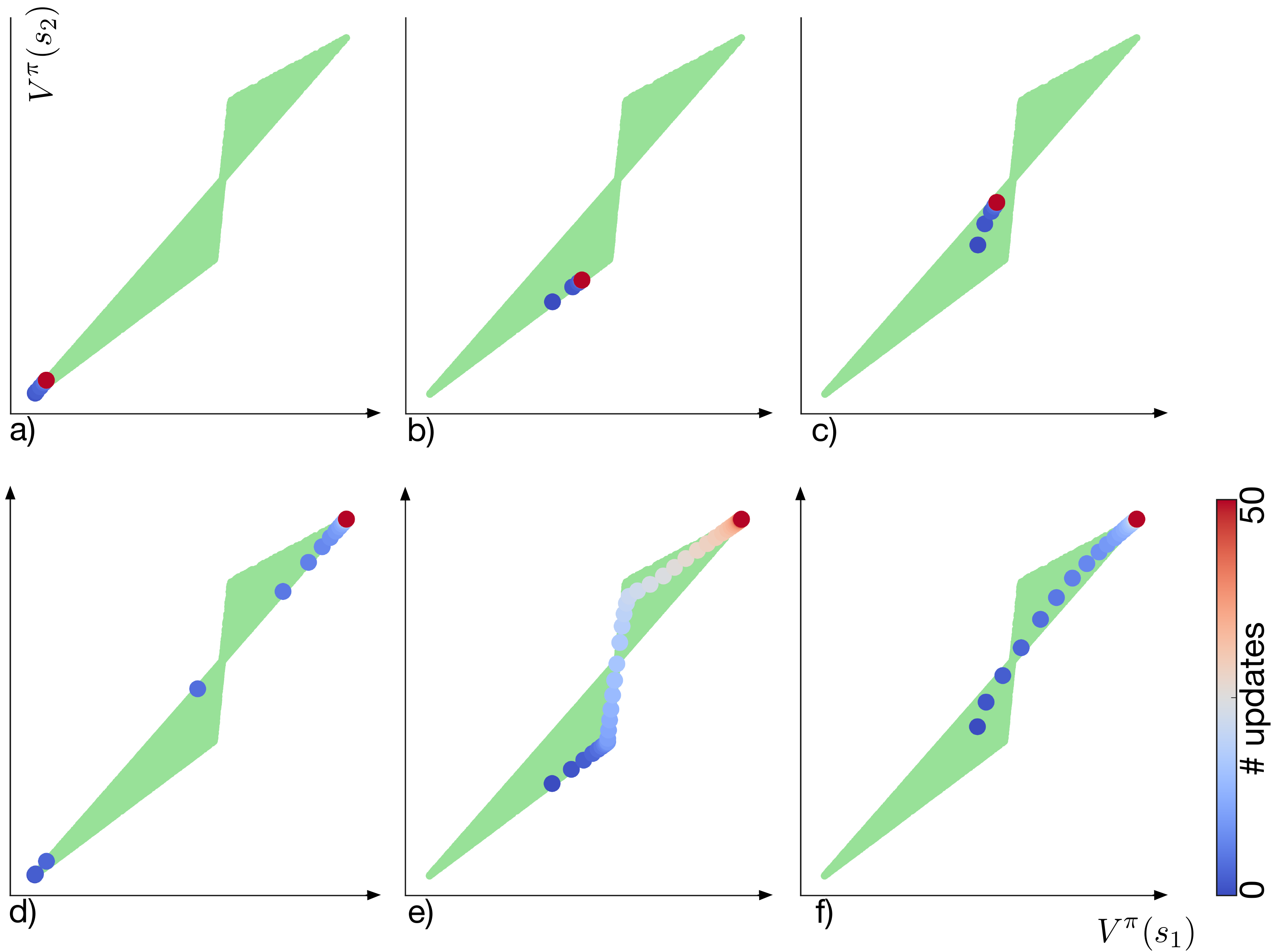}
\caption{The cross-entropy method without noise (CEM) (a, b, c); with constant noise (CEM-CN) (d, e, f).}
\label{fig:cem}
\end{figure}

As observed in the original work \cite{szita06learning}, the covariance of CEM without noise collapses (Figure \ref{fig:cem}.a)b)c)), and therefore reaches convergence for a suboptimal policy. However, the noise addition at each iteration prevents this undesirable behaviour (Figure \ref{fig:cem}.d)e)f)), as the algorithm converges to the optimal value functions for all three initialization points.

\section{Discussion and Concluding Remarks}

In this work, we characterized the shape of value functions and established its surprising geometric nature: a possibly non-convex polytope. This result was based on the line theorem which provides guarantees of monotonic improvement as well as a line-like variation in the space of value functions.  This structural property raises the question of new learning algorithms based on a single state change, and what this might mean in the context of function approximation. 

We noticed the existence of self-intersecting spaces of value functions, which have a bottleneck. However, from our simple study of learning dynamics over a class of reinforcement learning methods, it does not seem that this bottleneck leads to any particular learning slowdown. 

Some questions remain open. Although those geometric concepts make sense for finite state action spaces, it is not clear how they generalize to the continuous case. There is a connection between representation learning and the polytopal structure of value functions that we have started exploring \cite{bellemare2019geometric}. Another exciting research direction is the relationship between the geometry of value functions and function approximation.

\section{Acknowledgements}
The authors would like to thank their colleagues at Google Brain for their help; Carles Gelada, Doina Precup, Georg Ostrovski, Marco Cuturi, Marek Petrik, Matthieu Geist, Olivier Pietquin, Pablo Samuel Castro, R\'{e}mi Munos, R\'{e}mi Tachet, Saurabh Kumar, and Zafarali Ahmed for useful discussion and feedback; Jake Levinson and Mathieu Guay-Paquet for their insights on the proof of Proposition 1; Mark Rowland for providing invaluable feedback on two earlier versions of this manuscript. 

\bibliography{main}
\bibliographystyle{icml2019}

\appendix
\onecolumn
{\Large \bf Appendix}

\section{Details of Markov Decision Processes \label{sec:mdps}}

In this section we give the specifics of the Markov Decision Processes presented in this work. We will use the following convention:
\begin{align*}
&r(s_i, a_j) = \hat{r}[i \times |\actions| + j]\\
&P(s_k | s_i, a_j) = \hat{P}[i \times |\actions| + j][k]
\end{align*}
where $\hat{P}, \hat{r}$ are the vectors given below.\\
\begin{align*}
\text{In Section \ref{sec:vf}, Figure \ref{fig:polytope_examples}: } (a) \; \; &|\actions| = 2, \; \gamma = 0.9 \\
&\hat{r} = [0.06, 0.38, -0.13, 0.64] \\
&\hat{P} = [[ 0.01,  0.99],
            [ 0.92,  0.08],
            [ 0.08, 0.92],
            [ 0.70,  0.30]]\\
\\
(b) \; \; &|\actions| = 2, \;\gamma = 0.9 \\
&\hat{r} = [ 0.88, -0.02, -0.98,  0.42] \\
&\hat{P} = [[ 0.96,  0.04],
       [ 0.19,  0.81],
       [ 0.43,  0.57],
       [ 0.72,  0.28]])\\
\\
(c) \; \;  &|\actions| = 3, \; \gamma = 0.9 \\
&\hat{r} = [-0.93, -0.49,  0.63,  0.78,  0.14, 0.41] \\
&\hat{P} = [[ 0.52,  0.48],
       [ 0.5,  0.5],
       [ 0.99,  0.01],
       [ 0.85 ,  0.15],
       [ 0.11,  0.89],
       [ 0.1,  0.9]]\\
\\
(d) \; \; &|\actions| = 2, \; \gamma = 0.9 \\
&\hat{r} = [-0.45, -0.1,  0.5,  0.5]\\
&\hat{P} = [[ 0.7,  0.3],
       [ 0.99,  0.01],
       [ 0.2,  0.8],
       [ 0.99,  0.01]]\\
\\
\text{In Section \ref{sec:vf}, Figure \ref{fig:line_drawn}, \ref{fig:nonlinear}, \ref{fig:hull}, \ref{fig:boundary}: (left)} \; \; &|\actions| = 3, \; \gamma = 0.8 \\
&\hat{r} = [-0.1, -1.,  0.1,  0.4,  1.5, 0.1] \\
&\hat{P} = [[ 0.9,  0.1],
        [ 0.2,  0.8],
        [ 0.7,  0.3],
        [ 0.05 ,  0.95],
        [ 0.25,  0.75],
        [ 0.3,  0.7]]\\
\\
\text{(right)} \; \; &|\actions| = 2, \; \gamma = 0.9 \\
&\hat{r} = [-0.45, -0.1,  0.5,  0.5]\\
&\hat{P} = [[ 0.7,  0.3],
       [ 0.99,  0.01],
       [ 0.2,  0.8],
       [ 0.99,  0.01]]\\
\\
\text{In Section \ref{sec:dynamics}: }&|\actions| = 2, \; \gamma = 0.9 \\
&\hat{r} = [-0.45, -0.1,  0.5,  0.5]\\
&\hat{P} = [[ 0.7,  0.3],
       [ 0.99,  0.01],
       [ 0.2,  0.8],
       [ 0.99,  0.01]]
\end{align*}
\newpage

\section{Notation for the proofs}
In the section we present the notation that we use to establish the results in the main text.
The space of policies $\policies$ describes a Cartesian product of simplices that we can express as a space of $|\states| \times |\actions|$ matrices. However, we will adopt for policies, as well as the other components of $\mdp$, a convenient matrix form similar to \citep{wang2007dual}.
\begin{itemize}
\item The transition matrix $\transitions$ is a $|\states| |\actions| \times |\states| $ matrix denoting the probability of going to state $s'$ when taking action $a$ in state $s$ .
\item A policy $\pi$ is represented by a block diagonal $|\states| \times |\states| |\actions|$ matrix $M_{\pi}$. Suppose the state $s$ is indexed by $i$ and the action $a$ is indexed by $j$ in the matrix form, then we have that $M_{\pi}(i, i \times |\actions| + j) = \pi(a|s)$. The rest of the entries of $M_{\pi}$ are 0. From now on, we will confound $\pi$ and $M_{\pi}$ to enhance readability.
\item The transition matrix $\ppi=\pi \transitions$ induced by a policy $\pi$ is a $|\states| \times |\states|$ matrix denoting the probability of going from state $s$ to state $s'$ when following the policy $\pi$.
\item The reward vector $\rewards$ is a $|\states| |\actions| \times 1$ matrix denoting the expected reward when taking action $a$ in state $s$. The reward vector of a policy $r_\pi = \pi r$ is a $|\states| \times 1$ vector.
\item The value function $V^\pi$ of a policy $\pi$ is a $|\states| \times 1$ matrix.
\item We note $\col_i$ the $i$-th column of $(\eye - \gamma P^{\pi})^{-1}$.
\end{itemize}

Under these notations, we can define the Bellman operator $\bellop^{\pi}$ and the optimality Bellman operator $\bellop^*$ as follows:
\begin{align*}
&\bellop^{\pi} V^\pi = r_\pi + \gamma P^\pi V^\pi = \pi(r + \gamma P V^\pi) \\ 
\forall s \in \states, \; &\bellop^* V^{\pi}(s) = \max_{\pi' \in \policies} r_{\pi'}(s) + \gamma P^{\pi'} V^\pi(s).
\end{align*}

\section{Supplementary Results} \label{sec:supp_results}

\begin{lemma}
$f_v$ is infinitely differentiable on $\policies$.
\label{lm:differentiable}
\end{lemma}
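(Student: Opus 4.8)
The plan is to exhibit $f_v$ as a composition and product of maps that are each infinitely differentiable, the only genuine content being the invertibility of $\eye - \gamma \Ppi$ throughout $\policies$. First I would note that in the matrix form of Appendix B both $\Ppi = \pi \transitions$ and $\rpi = \pi \rewards$ are linear in the entries of the policy matrix $\pi$. Since $\policies$ is a product of simplices (a closed, non-open set), I will understand differentiability as that of the restriction to $\policies$ of maps defined on an open neighbourhood of $\policies$ inside the ambient space of $|\states| \times |\states||\actions|$ matrices, where the entries are free coordinates. With this convention, $\pi \mapsto \Ppi$ and $\pi \mapsto \rpi$ are polynomial in the coordinates, hence $C^\infty$.

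The central step is to show that $\eye - \gamma \Ppi$ is invertible for every $\pi \in \policies$. For this I would use that $\Ppi$ is row-stochastic (nonnegative entries, each row summing to $1$), so its spectral radius equals $1$; consequently every eigenvalue of $\gamma \Ppi$ has modulus at most $\gamma < 1$, and every eigenvalue of $\eye - \gamma \Ppi$ has the form $1 - \gamma \lambda$ with $|1 - \gamma \lambda| \ge 1 - \gamma > 0$. Hence $\det(\eye - \gamma \Ppi) \neq 0$ on all of $\policies$, and by continuity on a neighbourhood of it.

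Next I would invoke Cramer's rule, $(\eye - \gamma \Ppi)^{-1} = \adj(\eye - \gamma \Ppi) \, / \, \det(\eye - \gamma \Ppi)$. The entries of the adjugate and the determinant are polynomials in the entries of $\Ppi$, hence polynomials in the entries of $\pi$; since the denominator never vanishes, each entry of $(\eye - \gamma \Ppi)^{-1}$ is a rational function of $\pi$ with nowhere-vanishing denominator, and is therefore $C^\infty$ (indeed real-analytic). Finally, $f_v = (\eye - \gamma \Ppi)^{-1} \rpi$ is a matrix product of $C^\infty$ maps, so $f_v$ is $C^\infty$.

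The hard part — really the only nontrivial ingredient — is the uniform invertibility of $\eye - \gamma \Ppi$ over the whole policy space; everything else is the standard smoothness of polynomial and rational maps and closure of $C^\infty$ under composition and products. A minor subtlety worth flagging is the meaning of differentiability on the non-open domain $\policies$, handled cleanly by the ambient-neighbourhood convention above, or equivalently by the Neumann series $(\eye - \gamma \Ppi)^{-1} = \sum_{n \ge 0} \gamma^n (\Ppi)^n$, which converges because $\gamma < 1$ and exhibits analyticity directly.
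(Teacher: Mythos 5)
Your proof is correct and follows essentially the same route as the paper's: writing $(\eye - \gamma \Ppi)^{-1}$ via the adjugate-over-determinant formula and observing that the entries of $f_v$ are rational functions of $\pi$ with nowhere-vanishing denominator. You are in fact more complete than the paper, which merely asserts $\det(\eye - \gamma \Ppi) \neq 0$ on $\policies$ without the spectral-radius justification you supply, and which does not address the (minor) subtlety of differentiability on the non-open set $\policies$.
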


\begin{proof}
We have that:
\begin{align*}
f_v(\pi) & = (\eye - \gamma \pi \transitions)^{-1}\pi\rewards \\
&= \frac{1}{\det(\eye - \gamma \pi \transitions)} \adj(\eye - \gamma \pi \transitions)\pi\rewards.
\end{align*}

Where $\det$ is the determinant and where $\adj$ is the adjunct. $\forall \pi \in \policies, \det(\eye - \gamma \pi \transitions) \neq 0$, therefore $f_v$ is infinitely differentiable.
\end{proof}

\begin{lemma}
Let $\pi \in \policies$, $s_1,..,s_k \in \states$, and $\pi' \in \agree$. We have
$$Span(\col_{k+1},..,\col_{|\states|}) =  Span(C^{\pi'}_{k+1},..,C^{\pi'}_{|\states|}).$$
\label{lm:matching}
\end{lemma}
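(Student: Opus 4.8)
The plan is to identify each span with the preimage of a fixed coordinate subspace under the respective resolvent, and then to observe that this preimage depends only on those rows of $\eye - \gamma \Ppi$ that the two policies share. Without loss of generality relabel the states so that $s_1, \dots, s_k$ carry indices $1, \dots, k$, and set $A = \eye - \gamma \Ppi$ and $B = \eye - \gamma P^{\pi'}$, so that $\col_i = A^{-1} e_i$ and $C^{\pi'}_i = B^{-1} e_i$, where $e_i$ is the $i$-th standard basis vector. Let $W = Span(e_{k+1}, \dots, e_{|\states|}) \subset \reals^{|\states|}$ be the subspace of vectors whose first $k$ coordinates vanish; note $\dim W = |\states| - k$.

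First I would rewrite the two spans as images of $W$. Since the $i$-th column of $A^{-1}$ is $A^{-1}e_i$, we have $Span(\col_{k+1}, \dots, \col_{|\states|}) = A^{-1} W$, and likewise $Span(C^{\pi'}_{k+1}, \dots, C^{\pi'}_{|\states|}) = B^{-1} W$. Because $A$ is invertible, $A^{-1} W = \{ v \in \reals^{|\states|} : Av \in W \}$, and the condition $Av \in W$ is precisely the requirement that the first $k$ entries of $Av$ vanish, i.e. that $(\text{row}_i A) \cdot v = 0$ for every $i \le k$. The identical rewriting applies with $B$ in place of $A$. Thus each span is the solution set of $k$ linear constraints read off from the first $k$ rows of $A$ (respectively $B$).

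Next I would invoke Lemma \ref{lm:zeros}: since $\pi$ and $\pi'$ agree on $s_1, \dots, s_k$, the matrix $\Ppi - P^{\pi'}$ has zeros in rows $1, \dots, k$, hence $A$ and $B$ have identical first $k$ rows. Consequently the two systems ``$(\text{row}_i A)\cdot v = 0$ for $i \le k$'' and ``$(\text{row}_i B)\cdot v = 0$ for $i \le k$'' are the same system, so their solution sets coincide, giving $A^{-1} W = B^{-1} W$ and hence the claimed equality of spans.

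I expect the only point needing care to be the preimage characterization $A^{-1}W = \{ v : Av \in W\}$ together with the accompanying dimension bookkeeping: invertibility of $A$ forces $A^{-1}W$ to have dimension $\dim W = |\states| - k$, which confirms that we are comparing two genuine $(|\states|-k)$-dimensional subspaces rather than inadvertently enlarging them. Everything else is the routine observation that a preimage constraint $Av \in W$ involves only the rows of $A$ indexed by the nonzero coordinates of the quotient, and that those rows are shared by $A$ and $B$ by construction.
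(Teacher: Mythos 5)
Your proof is correct and is essentially the paper's argument: both rest on Lemma \ref{lm:zeros} to conclude that $A = \eye - \gamma \Ppi$ and $B = \eye - \gamma P^{\pi'}$ share their first $k$ rows, and both identify each span with the set of vectors annihilated by those shared rows. The only difference is one of packaging: your preimage characterization $A^{-1}W = \{v : Av \in W\}$ gives the equality of the two spans directly, whereas the paper establishes the inclusion of both spans into $Span(L_1,\dots,L_k)^{\bot}$ and then upgrades to equality by counting dimensions --- a step your formulation renders unnecessary (your closing dimension remark is a sanity check, not a needed ingredient).
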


\begin{proof}
As $P^\pi$ and $P^{\pi'}$ are equal on their first $k$ rows, we also have that $(\eye - \gamma P^\pi)$ and $(\eye - \gamma P^{\pi'})$ are equal on their first $k$ rows. We note these $k$ rows $L_1, ..., L_k$.

By assumption, we have that:
$$\forall i \in \{1,\dots,k\}, \forall j \in \{k+1,\dots,|\states|\}, L_i \col_{j}=0, L_i C^{\pi'}_{j}=0.$$

Which we can rewrite,
\begin{align*}
&Span(\col_{k+1},\dots,\col_{|\states|}) \subset Span(L_1,\dots, L_k)^{\bot} \\
&Span(C^{\pi'}_{k+1},..,C^{\pi'}_{|\states|}) \subset Span(L_1,\dots, L_k)^{\bot}
\end{align*}

Now using, $\dim Span(\col_{k+1},\dots,\col_{|\states|}) = \dim Span(C^{\pi'}_{k+1},..,C^{\pi'}_{|\states|}) = \dim Span(L_1,\dots, L_k)^{\bot} = |\states| - k$,  we have:
\begin{align*}
&Span(\col_{k+1},\dots,\col_{|\states|}) = Span(L_1,\dots, L_k)^{\bot} \\
&Span(C^{\pi'}_{k+1},..,C^{\pi'}_{|\states|}) = Span(L_1,\dots, L_k)^{\bot}.
\end{align*}

\end{proof}

\section{Proofs}

\lemmacompact*
\begin{proof}
$\policies$ is connected since it is a convex space, and it is compact because it is closed and bounded in a finite dimensional real vector space. Since $f_v$ is continuous (Lemma \ref{lm:differentiable}), we have $f_v(\policies) = \valuefunctions$ is compact and connected.
\end{proof}

\lmzeros*
\begin{proof}
Suppose without loss of generality that $\{s_1,..,s_k\}$ are the first $k$ states in the matrix form notation. We have,
\begin{align*}
\rewards_{\pi_1} &= \pi_1 \rewards\\
\rewards_{\pi_2} &= \pi_2 \rewards\\
\transitions^{\pi_1} &= \pi_1 \transitions\\
\transitions^{\pi_2} &= \pi_2 \transitions.
\end{align*}
Since $\pi_1(\dotbar s) = \pi_2(\dotbar s)$ for all $s \in \{s_1,..,s_k\}$, the first $k$ rows of $\pi_1, \pi_2$ are identical in the matrix form notation. Therefore, the first k elements of $\rewards_{\pi_1}$ and $\rewards_{\pi_2}$ are identical, and the first k rows of $\transitions^{\pi_1}$ and $\transitions^{\pi_2}$ are identical, hence the result. 
\end{proof}

\linelemmaone*
\begin{proof}
Let us first show that $f_v(\agree) \subset \valuefunctions \cap \affinesev$. \\
Let $\pi' \in \agree$, i.e. $\pi'$ agrees with $\pi$ on $s_1, .., s_k$. Using Bellman's equation, we have:
\begin{align}
V^{\pi'} - V^{\pi} &= r_{\pi'} - r_{\pi} + \gamma P^{\pi'} V^{\pi'} - \gamma P^{\pi} V^{\pi} \nonumber \\
&= r_{\pi'} - r_{\pi} + \gamma (P^{\pi'} - P^{\pi}) V^{\pi'} + \gamma P^{\pi}(V^{\pi'} - V^{\pi}) \nonumber \\
&= (\eye - \gamma P^{\pi})^{-1} \big( r_{\pi'} - r_{\pi} + \gamma (P^{\pi'} -P^{\pi})V^{\pi'} \big)
\label{eq:sev}.
\end{align}

Since the policies $\pi'$ and $\pi$ agree on the states $s_1,\dots,s_k$, we have, using Lemma \ref{lm:zeros}:
$$\left\{
    \begin{array}{ll}
        &r_{\pi'} - r_{\pi} \;\; \mbox{   is zero on its first k elements}\\
        &P^{\pi'} -P^{\pi} \mbox{ is zero on its first k rows.}
    \end{array}
\right.$$

Hence, the right-hand side of Eq.~\ref{eq:sev} is the product of a matrix with a vector whose first $k$ elements are 0. Therefore
\begin{align*}
    V^{\pi'} \in V^{\pi} + Span(\col_{k+1},..,\col_{|\states|})  \; .
\end{align*}

We shall now show that $\valuefunctions \cap \affinesev \subset f_v(\agree)$.

Suppose $V^{\hat{\pi}} \in \affinesev$. We want to show that there is a policy $\pi' \in \agree$ such that $V^{\pi'} = V^{\hat{\pi}}$. We construct $\pi'$ the following way:

$$\pi' = \left\{
    \begin{array}{ll}
        \pi(\dotbar s) & \mbox{if } s \in \{s_1,..,s_k\} \\
        \hat{\pi}(\dotbar s) & \mbox{otherwise.}
    \end{array}
\right.$$

Therefore, using the result of the first implication of this proof:
\begin{align*}
&V^{\hat{\pi}} - V^{\pi'} \in Span(\col_{k+1},..,\col_{|\states|}) \mbox{ by assumption} \\
&V^{\hat{\pi}} - V^{\pi'} \in Span(C^{\pi'}_1,..,C^{\pi'}_{k}) \mbox{ since } \hat{\pi} \mbox{ and }\pi' \mbox{ agree on } s_{k+1},\dots, s_{|\states|}. 
\end{align*}

However, as $\pi, \pi' \in \agree$, we have using Lemma \ref{lm:matching}:
\begin{align*}
&Span(\col_{k+1},..,\col_{|\states|}) =  Span(\colprime_{k+1},..,\colprime_{|\states|}).
\end{align*}

Therefore, $V^{\hat{\pi}} - V^{\pi'} \in Span(\colprime_{1},..,\colprime_{k}) \cap Span(\colprime_{k+1},..,\colprime_{|\states|}) = \{0\}$, meaning that $V^{\hat{\pi}} = V^{\pi'} \in  f_v(\agree)$.
\end{proof}

\lemmagproperties*
\begin{proof}

(i) $g$ is continuously differentiable as a composition of two continuously differentiable functions.\\

(ii) We want to show that we have either $V^{\pi_1} \lvec V^{\pi_0}$ or $V^{\pi_1} \gvec V^{\pi_0}$.

Suppose, without loss of generality, that $s$ is the first state in the matrix form. Using Lemma \ref{lm:freedom}, we have:

$$V^{\pi_0} = V^{\pi_1} + \alpha C_1^{\pi_1},\mbox{ with } \alpha \in \reals .$$

As $(\eye - \gamma P^{\pi_1})^{-1} = \sum_{i=0}^{\infty}(\gamma \pi \transitions)^{i}$, whose entries are all positive, $C^{\pi_1}_{1}$ is a vector with positive entries. Therefore we have $V^{\pi_1} \lvec V^{\pi_0}$ or $V^{\pi_1} \gvec V^{\pi_0}$, depending on the sign of $\alpha$.\\

(iii)  We have, using Equation (\ref{eq:sev})

\begin{align*}
V^{\pi_0} - V^{\pi_\mu} &= (\eye - \gamma P^{\pi_\mu})^{-1} \big( r_{\pi_0} - r_{\pi_\mu} + \gamma (P^{\pi_0} -P^{\pi_\mu})V^{\pi_0} \big)  \\
V^{\pi_0} - V^{\pi_1} &= (\eye - \gamma P^{\pi_1})^{-1} \big( r_{\pi_0} - r_{\pi_1} + \gamma (P^{\pi_0} -P^{\pi_1})V^{\pi_0} \big). 
\end{align*} 

Now, using
\begin{align*}
r_{\pi_0} &= \pi_0 r\\
r_{\pi_\mu} &= \pi_{\mu} r = \pi_{0}r + \mu (\pi_1 - \pi_0) r\\
P^{\pi_0} &= \pi_0 P\\
P^{\pi_\mu} &= \pi_{\mu} P = \pi_{0}P + \mu (\pi_1 - \pi_0) P \; ,
\end{align*}
we have
\begin{align}
V^{\pi_0} - V^{\pi_\mu} &= \mu (\eye - \gamma P^{\pi_\mu})^{-1} \big( r_{\pi_0} - r_{\pi_1} + \gamma (P^{\pi_0} -P^{\pi_1})V^{\pi_0} \big) \nonumber\\
&= \mu (\eye - \gamma P^{\pi_\mu})^{-1} (\eye - \gamma P^{\pi_1}) (V^{\pi_0} - V^{\pi_1}).
\label{eq:funcrat}
\end{align}

Therefore, $g(0)=g(1) \Rightarrow V^{\pi_0} - V^{\pi_1} = 0 \Rightarrow V^{\pi_0} - V^{\pi_\mu} = 0 \Rightarrow g(\mu) = g(0)$.

(iv) If $g(0) = g(1)$, the result is true since we can take $\rho = 0$ using (iii).

Suppose $g(0) \neq g(1)$, let us prove the existence of $\rho$ and that it is a rational function in $\mu$. Reusing the Equation \ref{eq:funcrat}, we have
\begin{align*}
V^{\pi_0} - V^{\pi_\mu} &= \mu (\eye - \gamma P^{\pi_\mu})^{-1} (\eye - \gamma P^{\pi_1}) (V^{\pi_0} - V^{\pi_1}) \\
&= \mu (\eye - \gamma(P^{\pi_{0}} + \mu(P^{\pi_1} - P^{\pi_0})) )^{-1} (\eye - \gamma P^{\pi_1}) (V^{\pi_0} - V^{\pi_1})\\
&= \mu (\eye - \gamma P^{\pi_1} - \gamma(1-\mu)(P^{\pi_{0}}-P^{\pi_{1}})) )^{-1} (\eye - \gamma P^{\pi_1}) (V^{\pi_0} - V^{\pi_1}).
\end{align*}

As we have that $P^{\pi_{0}}-P^{\pi_{1}}$ is a rank one matrix (Lemma \ref{lm:zeros}) that we can express as $P^{\pi_{0}}-P^{\pi_{1}}=uv^t$ with $u, v \in \reals^{|\states|}$. From the Sherman-Morrison formula:

\begin{align*}
(\eye - \gamma P^{\pi_1} - \gamma(1-\mu)(P^{\pi_{0}}-P^{\pi_{1}})) )^{-1} 
= (\eye - \gamma P^{\pi_1})^{-1} - \gamma(1-\mu)\frac{(\eye - \gamma P^{\pi_1})^{-1}(P^{\pi_{0}}-P^{\pi_{1}})(\eye - \gamma P^{\pi_1})^{-1}}{1 + \gamma(1-\mu)v^t(\eye - \gamma P^{\pi_1})^{-1}u}.
\end{align*}

Define $\omega_{\pi_1, \pi_0} = v^t(\eye - \gamma P^{\pi_1})^{-1}u$, we have

\begin{align*}
V^{\pi_0} - V^{\pi_\mu} 
= \mu V^{\pi_0} - \mu V^{\pi_1} - \frac{\gamma\mu(1-\mu)}{1 + \omega_{\pi_1, \pi_0}\gamma(1-\mu)} (\eye - \gamma P^{\pi_1})^{-1}(P^{\pi_{0}}-P^{\pi_{1}})(V^{\pi_0} - V^{\pi_1}).
\end{align*}

As in (i) we have that $(P^{\pi_{0}}-P^{\pi_{1}})(V^{\pi_0} - V^{\pi_1})$ is zeros on its last $\states - 1$ elements using an argument similar to Lemma \ref{lm:zeros}, therefore

$$(\eye - \gamma P^{\pi_1})^{-1}(P^{\pi_{0}}-P^{\pi_{1}})(V^{\pi_0} - V^{\pi_1}) = \beta_{\pi_0, \pi_1} C^{\pi_1}_1, \mbox{ with } \beta_{\pi_0, \pi_1} \in \reals.$$

Now recall from the proof of (i) that similarly, we have
$$V^{\pi_0} - V^{\pi_1} = \alpha_{\pi_0, \pi_1} C^{\pi_1}_1.$$

As by assumption $V^{\pi_0} - V^{\pi_1} \neq 0$, we have:
$$(\eye - \gamma P^{\pi_1})^{-1}(P^{\pi_{0}}-P^{\pi_{1}})(V^{\pi_0} - V^{\pi_1}) =  \frac{\beta_{\pi_0, \pi_1}}{\alpha_{\pi_0, \pi_1}}(V^{\pi_0} - V^{\pi_1}).$$

Finally we have,
\begin{align*}
V^{\pi_0} - V^{\pi_\mu} 
&= \mu V^{\pi_0} - \mu V^{\pi_1} - \frac{\gamma\mu(1-\mu)}{1 + \omega_{\pi_1, \pi_0}\gamma(1-\mu)}\frac{\beta_{\pi_0, \pi_1}}{\alpha_{\pi_0, \pi_1}}(V^{\pi_0} - V^{\pi_1}) \\
&= \Big(\mu - \frac{\gamma\mu(1-\mu)}{1 + \omega_{\pi_1, \pi_0}\gamma(1-\mu)}\frac{\beta_{\pi_0, \pi_1}}{\alpha_{\pi_0, \pi_1}}\Big)(V^{\pi_0} - V^{\pi_1}).
\end{align*}

Therefore, $\rho$ is a rational function in $\mu$, hence continuous, that we can express as:
$$\rho(\mu) = \mu - \frac{\gamma\mu(1-\mu)}{1 + \omega_{\pi_1, \pi_0}\gamma(1-\mu)}\frac{\beta_{\pi_0, \pi_1}}{\alpha_{\pi_0, \pi_1}}.$$

Now let us prove that $\rho$ is strictly monotonic. Suppose that $\rho$ is not strictly monotonic. As $\rho$ is continuous, we have that $\rho$ is not injective. Hence, $\exists \mu_0, \mu_1 \in [0, 1]$ distinct and the associated mixture of policies $\pi_{\mu_0}, \pi_{\mu_1}$ such that 

\begin{align*}
g(\mu_0) = g(\mu_1) & \Leftrightarrow V^{\pimuzero} = V^{\pimuone}\\
  &\Leftrightarrow \bellop^{\pimuzero} V^{\pimuzero} = \bellop^{\pimuone} V^{\pimuone}\\
  &\Leftrightarrow \bellop^{\pimuzero} V^{\pimuzero} = \bellop^{\pimuone} V^{\pimuzero} \\ 
  &\Leftrightarrow (\muzero \bellop^{\pizero} + (1-\muzero)\bellop^{\pione} ) V^{\pimuzero} =  (\muone \bellop^{\pizero} + (1-\muone)\bellop^{\pione} ) V^{\pimuzero}\\
  &\Leftrightarrow \muzero (\bellop^{\pizero} - \bellop^{\pione} ) V^{\pimuzero} = \muone (\bellop^{\pizero} - \bellop^{\pione} ) V^{\pimuzero}\\
  &\Leftrightarrow \bellop^{\pizero} V^{\pimuzero} = \bellop^{\pione} V^{\pimuzero}.
\end{align*}

Therefore we have
$$V^{\pimuzero} = \bellop^{\pimuzero} V^{\pimuzero} =  (\muzero \bellop^{\pizero} + (1-\muzero)\bellop^{\pione} ) V^{\pimuzero} = \bellop^{\pione} V^{\pimuzero}.$$

Therefore
$$\bellop^{\pizero} V^{\pimuzero} = \bellop^{\pione} V^{\pimuzero} = V^{\pimuzero}.$$

However, the Bellman operator has a unique fixed point, therefore
\begin{align*}
V^{\pizero} = V^{\pione} = V^{\pimuzero},
\end{align*}

which contradicts our assumption.
\end{proof}

\linetheorem*
\begin{proof}
Let us start by proving the first statement of the theorem which is the existence of two $s$-deterministic policies $\pi_u, \pi_l$ in $\agreeone$ that respectively dominates and is dominated by all other policies.\\ 

The existence of $\pi_l$ and $\pi_u$ (without enforcing their $s$-determinism), whose value functions are respectively dominated or dominate all other value functions of policies of $\agreeone$, is given by: 
\begin{itemize}
\item $f_v(\agreeone)$ is compact as an intersection of a compact and an affine plane (Lemma \ref{lm:freedom}).
\item There is a total order on this compact space ((ii) in Lemma \ref{lm:gproperties}).
\end{itemize}

Suppose $\pi_l$ is not $s$-deterministic, then there is $a \in \actions$ such that $\pi_l(a|s)=\mu^* \in (0,1)$. Hence we can write $\pi_l$ as a mixture of $\pi_1, \pi_2$ defined as follows

\begin{align}\forall s' \in \states, a' \in \actions,  \, &\pi_1(a'|s') = \left\{
  \begin{array}{ll}
    1 \mbox{ if } s'=s, a'=a \\
    0 \mbox{ if } s'=s, a' \neq a \\
    \pi_l(a'|s') \mbox{ otherwise. }
  \end{array}
\right.\\
&\pi_2(a'|s') = \left\{
  \begin{array}{ll}
    0 \mbox{ if } s'=s, a'=a \\
    \frac{1}{1-\mu^*}\pi_l(a'|s')   \mbox{ if } s'=s, a' \neq a \\
    \pi_l(a'|s') \mbox{ otherwise. }
  \end{array}
\right.
\end{align}

Therefore $\pi_l = \mu^* \pi_1 + (1-\mu^*) \pi_2$. We can use (iv) in Lemma \ref{lm:gproperties}, that gives that $g: \mu \mapsto f_v(\mu \pi_1 + (1-\mu) \pi_2)$ is either strictly monotonic or constant. If $g$ was strictly monotonic we would have a contradiction on $f_v(\pi_l)$ being minimum. Therefore $g$ is constant, and in particular 
$$f_v(\pi_l) = g(1) = f_v(\pi_1),$$

with $\pi_1 \; s$-deterministic. 

Similarly we can show there is an $s$-deterministic policy that has the same value function as $\pi_u$, hence proving the result.

Now let us prove the equivalence between (i), (ii) and (iii).
\begin{itemize}
\item Let $\pi' \in \agreeone$, we have: $f_v(\pi_l) \lvec f_v(\pi') \lvec f_v(\pi_u)$ and $f_v(\pi_l), f_v(\pi'), f_v(\pi_u)$ are on the same line (Lemma \ref{lm:freedom}). Therefore, $f_v(\pi')$ is a convex combination of $f_v(\pi_l)$ and $f_v(\pi_u)$ hence (i) $\subset$ (iii).
\item By definition, (ii) $\subset$ (i).
\item Lemma \ref{lm:gproperties} gives $\mu \mapsto f_v(\mu \pi_u + (1-\mu) \pi_l) = f_v(\pi_l) + \rho(\mu)(f_v(\pi_u) - f_v(\pi_l))$ with $\rho$ continuous and $\rho(0)=0, \rho(1)=1$. Using the theorem of intermediary values on $\rho$ we have that $\rho$ takes all values between $0$ and $1$. Therefore (iii) $\subset$ (ii)
\end{itemize}

We have (iii) $\subset$ (ii) $\subset$ (i) $\subset$ (iii). Therefore (i) = (ii) = (iii).

\end{proof} 

\thmconvexhull*
\begin{proof}
We prove the result by induction on the number of states $k$. If $k = 1$, the result is true by Theorem \ref{th:line}.

Suppose the result is true for $k$ states. Let $s_1,..,s_{k+1} \in \states$, we have by assumption that
$$\exists n \in \mathbb{N}, \pi_1,..,\pi_n \{s_1,..,s_k\}\mbox{-deterministic} \in \policies, \alpha_1,..,\alpha_n \in [0,1], \mbox{ s.t. }
\left\{
  \begin{array}{ll}
    V^{\pi} = \sum^{n}_{i=1}\alpha_i V^{\pi_i} \\
    \sum^{n}_{i=1}\alpha_i = 1
  \end{array}
\right.$$

However, using Theorem \ref{th:line}, we have

$\forall i \in [1,n], \exists \pi_{i,l}, \pi_{i,u} \in \policies,\left\{
  \begin{array}{lll}
    & \pi_{i,l}, \pi_{i,u} \; s_{k+1}\text{-deterministic}\\
    & \pi_{i,l}, \pi_{i,u} \; \text{agrees with }\pi_i \text{ on } s_1,..,s_k\\
    & \exists \beta_i \in [0, 1], V^{\pi_i} = \beta_i V^{\pi_{i,l}} + (1-\beta_i)V^{\pi_{i, u}} 
  \end{array}.
\right.$

Therefore
$$V^{\pi} = \sum^{n}_{i=1}\alpha_i (\beta_iV^{\pi_{i,l}} + (1-\beta_i)V^{\pi_{i, u}}),$$ thus concluding the proof.
\end{proof}

\polyhedrabyboundaries*
\begin{proof}
We will show the result by induction on the dimension of $K$.

For dim$(K)=1$, the proposition is true since $P$ is a polyhedron iff its boundary is a finite number of points.

Suppose the proposition is true for dim$(K)=n$, let us show that it is true for dim$(K)=n+1$. 

We can verify that if $P$ is a polyhedron, then:
\begin{itemize}
\item $P$ is closed.
\item There is a finite number of hyperplanes covering its boundaries (the boundaries of the half-spaces defining each convex polyhedron composing $P$).
\item The intersection of $P$ with these hyperplanes still are polyhedra.
\end{itemize}

Now let us consider the other direction of the implication, i.e. suppose that $P$ is closed, $\partial_K P \subset \cap_{i=1}^{k} H_i$, and $\forall i, P \cap H_i$ is a polyhedron. We will show that we can express $P$ as a finite union of polyhedra. \\

Suppose $x \in P$ and $x \notin \cup_{i=1}^{k} H_i$. By assumption, we have that $x \in \relint(P)$.
We will show that $x$ is in a intersection of closed half-spaces defined by the hyperplanes $H_1,..,H_k$ and that any other vector in this intersection is also in $P$ (otherwise we would have a contradiction on the boundary assumption).

A hyperplane $H_i$ defines two closed half-spaces denoted by $H_i^{+1}$ and $H_i^{-1}$ (the signs being arbitrary). And the intersections of those half-spaces form a partition of $K$, therefore:
$$ \exists \delta \in \{-1, 1\}^k, x \in \cap_{i=1}^{k} H_i^{\delta(i)} = P_{\delta}.$$
By assumption, $x \in \relint(\pdelta)$, since we assumed that $x \notin \cup_{i=1}^{k} H_i$. Now suppose $\exists y \in  \relint(\pdelta)$ s.t. $y \notin P$, we have: 
$$\exists \lambda \in [0,1], \lambda x + (1-\lambda)y = z \in \partial_K P.$$

However, $z \in \relint(\pdelta)$ because $\relint(\pdelta)$ is convex. Therefore $z \notin \bigcup^{k}_{i=1} H_i$ since $z \in \relint(\pdelta)$ which gives a contradiction. We thus have either $\relint(\pdelta) \subset P$ or $\relint(\pdelta) \cap P = \emptyset$.

Now suppose $\relint(\pdelta) \subset P$ and $\relint(\pdelta)$ nonempty. We have that $\cl(\relint(\pdelta)) = \pdelta$ \citep[Theorem~3.3]{brondsted2012introduction} and $P$ closed, meaning that $\pdelta \subset P$. Therefore, we have

$$\exists \delta_1,..,\delta_{j} \in \{-1, 1\}^k \text{ s.t. } P = (\cup^{k}_{i=1} P \cap H_i) \bigcup (\cup^{j}_{i=1} P_{\delta_i}).$$

$P$ is thus a finite union polyhedra, as $\{P \cap H_i\}$ are polyhedra by assumption, and $\{P_{\delta_i}\}$ are convex polyhedra by definition.
\end{proof}

\nconnect*
\begin{proof} 
We can define the policies $\pi_2,..,\pi_{|\states|-1}$ the following way:

$$\forall i \in [2, |\states|-1], \left\{
  \begin{array}{ll}
    \pi_i(\dotbar s_j) = \pi'(\dotbar s_j) &\text{ if } s_j \in \{s_1,..,s_{i-1}\}\\
    \pi_i(\dotbar s_j) = \pi(\dotbar s_j) &\text{ if } s_j \in \{s_i,..,s_{|\states|}\}
  \end{array}
\right.$$

Therefore, two consecutive policies $\pi_i, \pi_{i+1}$ only differ on one state. We can apply Theorem \ref{th:line} and thus conclude the proof.
\end{proof}

\thneighborhood*
\begin{proof}
We will prove the result by showing that $V^{\pi}$ is in $|\states| - k$ segments that are linearly independent by applying the line theorem on a policy $\hat{\pi}$ that has the same value function as $\pi$. We will then be able to conclude using the regularity of $f_v$.    

We can find a policy $\hat{\pi} \in \agree$, that has the same value function as $\pi$ by applying recursively Theorem \ref{th:line} on the states $\{s_{k+1},..,s_{|\states|}\}$, such that:\\

\(\exists a_{k+1,l}, a_{k+1, u},.., a_{|\states|,l}, a_{|\states|, u} \in \actions, \forall i \in \{k+1,..,|\states|\},\)
\[\hat{\pi}(a_{i,l}|s_i) = 1 - \hat{\pi}(a_{i,u}|s_i) = \hat{\mu}_i \in \; (0,1). \]

Note that $\hat{\mu}_i \notin \; \{0,1\}$ because we assumed that no $s$-deterministic policy has the same value function as $\pi$.\\
We define $\hat{\mu} = (\hat{\mu}_{k+1},...,\hat{\mu}_{|\states|}) \in (0, 1)^{|\states|-k}$  and the function $g:\; (0, 1)^{|\states|-k} \rightarrow \affinesev$ such that:
$$g(\mu) = f_v(\pi_\mu), \text{ with } \left\{
  \begin{array}{ll}
    \pi_\mu(a_{i,l}|s_i) = 1- \pi_\mu(a_{i,u}|s_i) = \mu_i &\text{ if } i \in \{k+1,\dots,|\states|\}\\
    \pi_\mu(\dotbar s_i) = \hat{\pi}(\dotbar s_i) &\text{ otherwise.}
  \end{array}
\right.$$

We have that:
\begin{enumerate}
\item g is continuously differentiable
\item $g(\hat{\mu}) = f_v(\hat{\pi})$
\item $\frac{\partial g}{\partial \mu_i}$ is non-zero at $\hat{\mu}$ (Lemma \ref{lm:gproperties}.iv)
\item $\frac{\partial g}{\partial \mu_i}$ is along the $i$-th column of $(I - \gamma P^{\hat{\pi}})^{-1}$ (Lemma \ref{lm:freedom})
\end{enumerate}

Therefore, the Jacobian of $g$ is invertible at $\mu$ since the columns of  $(I - \gamma P^{\hat{\pi}})^{-1}$ are independent, therefore by the inverse theorem function, there is a neighborhood of $g(\mu) = f_v(\hat{\pi})$ in the image space, which gives the result.
\end{proof}

\corolboundary*
\begin{proof}
Let $V^\pi \in \valuefunctions^y$; from Theorem \ref{th:neighbor}, we have that

$$ V^\pi \notin  \bigcup_{i=k+1}^{|\states|} \bigcup_{j=1}^{|\actions|} f_v(\agree \cap D_{s_i, a_j}) \Rightarrow V^\pi \in \text{relint}(V^y)
$$
where $\text{relint}$ refers to the relative interior in $\affinesev$.

Therefore, 
$$\partial \valuefunctions^y \subset \bigcup_{i=k+1}^{|\states|} \bigcup_{j=1}^{|\actions|} f_v(\agree \cap D_{s_i, a_j}).$$
\end{proof}

\thboundaries*
\begin{proof}
We prove the result by induction on the cardinality of the number of states $k$. \\
If $k=|\states|$, then $\agree = \{f_v(\pi)\}$ which is a polytope.\\

Suppose that the result is true for $k+1$, let us show that it is still true for $k$.

Let $\pi \in \Pi, s_1, .., s_k \in \states$, 
define the ensemble $\agree$ of policies that agree with $\pi$ on $\{s_1,..,s_k\}$ and $\valuefunctions^y = f_v(\agree)$. Using Lemma \ref{lm:freedom}, we have that $\valuefunctions^y = \valuefunctions \cap \affinesev$. \\

Now, using Corollary \ref{cr:boundary}, we have that:

$$\partial \valuefunctions^y \subset \bigcup_{i=k+1}^{|\states|} \bigcup_{j=1}^{|\actions|} f_v(\agree \cap D_{s_i, a_j}) = \bigcup_{i=k+1}^{|\states|} \bigcup_{j=1}^{|\actions|} \valuefunctions^y \cap H_{i,j},$$where $\partial$ refer to the relative boundary in $\affinesev$, and $H_{i,j}$ is an affine hyperplane of $\affinesev$ (Lemma \ref{lm:freedom}).

Therefore we have that:
\begin{enumerate}
\item $\valuefunctions^y = \valuefunctions \cap \affinesev$ is closed since it is an intersection of two closed ensembles
\item $\partial \valuefunctions^y \subset \bigcup_{i=k+1}^{|\states|} \bigcup_{j=1}^{|\actions|} H_{i,j}$ affine hyperplanes in $\affinesev$
\item $\valuefunctions^y \cap H_{i,j} = f_v(\agree \cap D_{s_i, a_j})$ is a polyhedron (induction assumption).
\end{enumerate}

$\valuefunctions^y$ verifies (i), (ii), (iii) in Proposition \ref{prop:tope}, therefore it is a polyhedron. We have $\valuefunctions^y$ bounded since $\valuefunctions^y \subset \valuefunctions$ bounded. Therefore, $\valuefunctions^y$ is a polytope.
\end{proof}



\end{document}